\definecolor{Positive}{rgb}{0,0.5,0}
\definecolor{Negative}{rgb}{0.5,0,0}
\newtheorem{theorem}{Theorem}
\newtheorem*{theorem*}{Theorem}
\newtheorem{definition}{Definition}
\newtheorem{remark}{Remark}[theorem]
\newtheorem*{condition*}{Condition}
\title{A Flat Minima Perspective on Understanding Augmentations and Model Robustness}
\author{
    Weebum Yoo\textsuperscript{\rm 1},
    Sung Whan Yoon\textsuperscript{\rm 1,\rm 2}\thanks{Corresponding Author}
}
\begin{document}

\maketitle

\begin{abstract}
Model robustness indicates a model's capability to generalize well on unforeseen distributional shifts, including data corruptions and adversarial attacks. 
Data augmentation is one of the most prevalent and effective ways to enhance robustness. Despite the great success of the diverse augmentations in different fields, a unified theoretical understanding of their efficacy in improving model robustness is lacking. 
We theoretically reveal a general condition for label-preserving augmentations to bring robustness to diverse distribution shifts through the lens of flat minima and generalization bound, which de facto turns out to be strongly correlated with robustness against different distribution shifts in practice.
Unlike most earlier works, our theoretical framework accommodates all the label-preserving augmentations and is not limited to particular distribution shifts.
We substantiate our theories through different simulations on the existing common corruption and adversarial robustness benchmarks based on the CIFAR and ImageNet datasets.
\end{abstract}
\begin{links}
    \link{Code}{https://github.com/pyoo96/aug-flatmin-robustness}
\end{links}


\section{Introduction}
\label{sec:intro}
\textit{Model robustness}, which is a critical factor of deep models in applications requiring high reliability, such as autonomous vehicles and medical diagnosis, entails maintaining performance against data distribution shifts.
In the past decade, data augmentation has been widely used as a popular and pragmatic technique to enhance the model performance, as well as the robustness against data corruption, adversarial attacks, or even domain shifts \cite{augment_survey, ME-ADA, PAC_DG1, DeepAug10:2021}.
The intuition of its efficacy relies on the belief that augmentations enrich the training data distribution, which allows models to easily extrapolate to unseen data distributions, which is the so-called generalization capability.

Despite the utility of augmentations, a unified theory formally clarifying how augmentations can enhance model robustness against diverse distribution shifts has not been well established. 
Prior analyses are mostly confined to either particular augmentations or adversarial robustness~\cite{ME-ADA, AugmentNAdvGood, NoLabelNAdvGood1, aug_analysis_adv1}. 
Although some studies step forward from the aforementioned works, their analyses rely solely on empirical observations without a theoretical rationale, or the focus is exclusively on certain categories of distribution shifts~\cite{zhang2024duality, aug_analysis_custom1, aug_analysis_dg}.

In this paper, we offer a series of theoretical insights, including a sufficient condition (termed the PSA condition), that explains how label-preserving data augmentations can bolster model robustness against general distributional shifts through the lens of flat minima.
Our analysis has two main branches: \textbf{i)} First, we mathematically bridge the general form of label-preserving augmentations to the improved generalization bound. 
To give a brief sketch of our development, we start to demonstrate the equivalence between the input space region covered by the augmented samples and the corresponding parameter space region with the same loss values (formalized by Theorem~\ref{thm1}). 
Based on the equivalence, we then claim that the augmentation satisfying the PSA condition flattens the loss surface on the parameter space (Theorem~\ref{thm2}), finally reaching to the improved generalization bound against distribution shifts via leveraging the flattened loss surface (Theorem~\ref{thm3}).
\textbf{ii)} Next, we validate our theoretical findings by evaluating existing augmentation methods across different robustness benchmarks, encompassing data distribution shifts caused by common corruptions and adversarial attacks.
Our findings show that when augmentations have non-negligible sample coverage near the original image--which aligns with the PSA condition--they consistently enhance model robustness. In contrast, when augmentations fail to improve robustness, they exhibit negligible density near the original image, indicating that PSA serves as both a sufficient condition for robustness and a factor highly correlated with robustness.

\begin{figure*}[t]
	\begin{center}
		\includegraphics[width=1.0\linewidth]{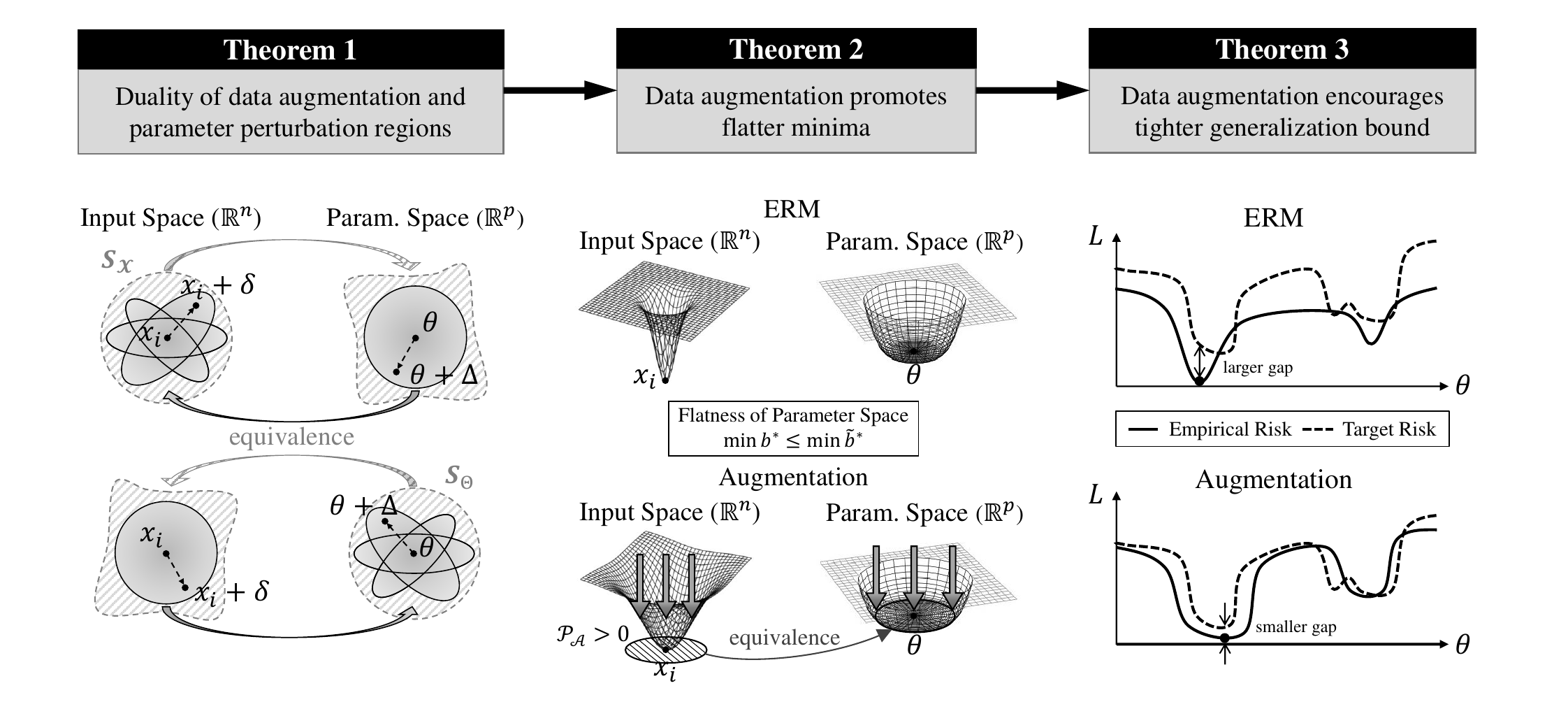}
	\end{center}
	\caption{A conceptual overview of how augmentations can relate to model robustness. Theorem~\ref{thm1} shows that the neighboring region of an input datapoint can be mapped to an equivalent region in the parameter space (and vice versa). Theorem~\ref{thm2} demonstrates that ensuring a nonzero probability for an augmentation within this neighborhood (PSA condition) promotes flatter minima in parameter space by effectively regulating the proximal regions of both the input and parameter spaces. Finally, Theorem~\ref{thm3} presents a generalization bound for models trained with augmentations, insisting that augmentations with dense representations near the original image foster a tighter generalization gap.}
\label{fig:sketch}
\end{figure*}

\section{Preliminaries}
\label{preliminaries}

\subsection{Basic Notations} 

Let us consider an input $x\in\mathbb{R}^{n}$ from input space $\mathcal{X}$, which is paired with a target label $y\in\mathbb{R}^{c}$ from label space $\mathcal{Y}$, where $n$ and $c$ are the dimensions of the input space and the label space.
A model $f(\cdot;\theta): \mathbb{R}^{n}\rightarrow\mathbb{R}^{c}$ parameterized by $\theta\in\mathbb{R}^{p}$ maps a given input to the estimated label, where $p$ is the dimension of the model parameter space.
The loss function $\mathcal{L}(\cdot, \cdot)$ quantifies how far the predicted label $f(x;\theta)$ deviates from the true label $y$, expressed as $\mathcal{L}\big(f(x;\theta), y\big)$. Given a dataset $\{ (x_i, y_i)  \}_{i=1}^N$ of $N$ samples drawn from data distribution $\mathcal{D}$, the true risk and the empirical risk are:
\begin{gather}
\mathcal{E}_{\mathcal{D}}(\theta):=\mathbb{E}_{(x,y)\sim \mathcal{D}}\big[\mathcal{L}(f(x;\theta), y)\big] \\
\hat{\mathcal{E}}_{\mathcal{D}}(\theta):=\frac{1}{N}\sum_{i=1}^{N}\mathcal{L}(f(x_i;\theta), y_i) 
\end{gather}
respectively, where $\mathcal{E}_{\mathcal{D}}(\theta)$ and $\hat{\mathcal{E}}_{\mathcal{D}}(\theta)$ denote the true and the empirical risk.

\subsubsection{Data Augmentation}
Data augmentation $\mathcal{A}(\cdot):\mathbb{R}^{n}\rightarrow\mathbb{R}^{n}$ augments a given input $x$ to augmented input $\tilde{x}:=\mathcal{A}(x)$.
Let us further represent it with the difference between the original input and the augmented one:  $\tilde{x}:=\mathcal{A}(x)=x+\delta$, where $\delta\in\mathbb{R}^{n}$.
\footnote{By formulating augmentation as an additive perturbation, we transform the perturbation on the parameter space to one on the input space. Details are in the following section.}
Also, we define $\mathcal{P}_{\mathcal{A}}(\tilde{x}|x)$ as the probability density function of an augmented sample $\tilde{x}$ given $x$. Finally, we let $\tilde{\mathcal{D}} := \{ (\mathcal{A}(x), y) : (x, y) \sim \mathcal{D} \}$ represent the augmented dataset.

\subsection{Model Flatness}

\subsubsection{Definition} \textit{Model flatness} characterizes the extent of change in the model's loss values across proximate points in the parameter space. When the loss rapidly changes around the found minima, it indicates that the model is located at \textit{sharp minima}. Otherwise, it denotes \textit{flat minima} when the loss varies smoothly.
The change of losses around the model parameters can be formalized as follows:
\begin{equation}\label{eq:SAM1}
\max_{\|\Delta\| \leq \gamma}\mathbb{E}_{(x,y)\sim \mathcal{D}}\big[\mathcal{L}(f(x;\theta + \Delta), y) - \mathcal{L}(f(x;\theta),y)\big],
\end{equation}
where $\Delta\in\mathbb{R}^{p}$ is the perturbation around model parameter $\theta$ that maximally increases loss within a radius $\gamma>0$.

\subsubsection{Sharpness-aware minimization}
The most popular principal way for finding flatter minima is Sharpness-Aware Minimization (SAM) \citep{SAM:2021}, which formally transforms loss minimization into a min-max optimization:
\begin{equation}\label{eq:SAM2}
\min_{\theta}\max_{\|\Delta\| \leq \gamma}\mathbb{E}_{(x,y)\sim \mathcal{D}}[\mathcal{L}(f(x;\theta + \Delta), y)], 
\end{equation}
As formulated in the work of \citep{SAM:2021}, when adding the loss value at $\theta$, only the first term of \eqref{eq:SAM1} remains, yielding the objective function in \eqref{eq:SAM2}.
Through the minimization of the maximization term, SAM aims to find the minima $\theta$ with flatter loss surfaces around $\gamma$ radius.

\section{How Can Augmentations Enhance Robustness?}\label{sec:3}

In this section, we provide a rigorous link between \textit{data augmentations} and \textit{improved generalization capability}. Our main claims are twofold: \textbf{i)} Translation between equivalent input manifold and parameter manifold, \textbf{ii)} Association of flatter loss in the input space via augmentations with parameter space flatness, leading to a reduced generalization gap.


\subsection{Translation Between Input and Parameter Perturbations}

Consider a perturbation $\delta \in \mathbb{R}^{n}$ around an input vector $x$. Intuitively, there should be a corresponding perturbation $\Delta \in \mathbb{R}^{p}$ around the parameter $\theta$ that yields the same output, i.e.,\ $f(x + \delta; \theta) = f(x; \theta + \Delta)$. Moreover, for any perturbation in the closed ball $\{\delta \in \mathbb{R}^n : \|\delta\| \leq \gamma\}$ in the input space, we hypothesize an \emph{equivalent} or \emph{compensatory} region $\mathcal{C}_\Theta$ in the parameter space---that is, for every $\|\delta\| \leq \gamma$, there is a matching parameter perturbation $\Delta \in \mathcal{C}_\Theta$ producing the same output. 

Conversely, the same intuition in reverse applies to parameter perturbations. Consider a perturbation $\Delta \in \mathbb{R}^{p}$ around $\theta$. For any $\|\Delta\|\le \gamma$, we posit an analogous \emph{compensatory} region $\mathcal{C}_\mathcal{X}$ in the input space such that $f(x;\theta + \Delta) = f(x + \delta; \theta)$. Specifically, there is some input shift $\delta \in \mathcal{C}_\mathcal{X}$ that replicates the same output change induced by $\Delta$. 

Our goal is to formalize this duality of perturbations in both the input and parameter spaces. We now present the notion of \emph{functional compensatory sets}, which identifies the precise sets of input shifts needed to compensate for every parameter perturbation of norm up to $\gamma$, and vice versa.

\subsubsection{Functional Compensatory Sets}

\begin{definition}(Functional Compensatory Sets)\label{def:compensatory} \medskip \\
\noindent
\textbf{(A) Parameter-to-Input.} 
Given a dataset $\mathcal{D}=\{x_i\}_{i=1}^N$ and a parameter perturbation set
$ \{\Delta \in \mathbb{R}^p : \|\Delta\|\le \gamma\}, $
a \emph{functional compensatory set} $\mathcal{C}_\mathcal{X}$ in the input space satisfies:
\begin{gather}
\text{For every} \: x \in \mathcal{D} \text{ and } \| \Delta \| \leq \gamma, \text{ there exists } \delta \in \mathcal{C}_\mathcal{X} \text{ s.t. } \nonumber\\
f(x;\theta+\Delta) = f(x+\delta;\theta).\nonumber
\end{gather}
This set $\mathcal C_\mathcal X$ is not necessarily unique, and it characterizes all possible input shifts that replicate the effect of parameter perturbations within norm $\gamma$.

\medskip
\noindent
\textbf{(B) Input-to-Parameter.} 
Given a dataset $\mathcal{D}=\{x_i\}_{i=1}^N$ and an input perturbation set 
$ \{\delta \in \mathbb{R}^n : \|\delta\|\le \gamma\}, $
a \emph{functional compensatory set} $\mathcal{C}_\Theta$ in the parameter space satisfies: 
\begin{gather}
\text{For every} \: x \in \mathcal{D} \text{ and } \| \delta \| \leq \gamma, \text{ there exists } \Delta \in \mathcal{C}_\Theta \text{ s.t. } \nonumber\\
f(x+\delta;\theta) = f(x;\theta+\Delta).\nonumber
\end{gather}
This set $\mathcal{C}_\Theta$ need not be unique, and captures all possible parameter shifts that replicate the effect of any input perturbation within norm $\gamma$.
\end{definition}

\subsubsection{Formalizing the Input–Parameter Perturbation Duality}

The following result shows that an $n$-dimensional ball in the input space, whose radius depends on dataset $\mathcal{D}$ and the Jacobian's singular values, can compensate for any parameter perturbation $\| \Delta \| \leq \gamma$. Conversely, the same analysis guarantees that a $p$-dimensional ball in the parameter space, determined by $\mathcal{D}$ and the Jacobian, can compensate for any input perturbation $\| \delta \| \leq\ \gamma$.

Let $\mathbf{J}$ be the Jacobian of a function $f: \mathbb{R}^n \times \mathbb{R}^p \rightarrow \mathbb{R}^c$, where $\mathbb{R}^n, \mathbb{R}^p,$ and $\mathbb{R}^c$ are the input, parameter, and output spaces. The Jacobian matrix $\mathbf{J} \in \mathbb{R}^{c \times (n + p)}$ can be decomposed into its input side and the parameter side partial derivatives, i.e. $\mathbf{J} = \left[\mathbf{J_x} \ \ \mathbf{J}_{\boldsymbol{\theta}} \right]$, where $\mathbf{J_x} \in \mathbb{R}^{c \times n}$ and $\mathbf{J}_{\boldsymbol{\theta}} \in \mathbb{R}^{c \times p}$. We will use the notation $\mathbf{J_x}(x_i)$ and $\mathbf{J}_{\boldsymbol{\theta}}(\theta)$ when we need to explicitly represent the evaluation of $\mathbf{J_x}$ and $\mathbf{J}_{\boldsymbol{\theta}}$ at point $x_i \in \mathbb{R}^n$ and $\theta \in \mathbb{R}^p$, respectively.  Finally, let $\sigma_{\min}^{x_i}$ and $\sigma_{\max}^\theta$ represent the mininum and the maximum values of the matrix $\mathbf{J_x}(x_i)$ and $\mathbf{J}_{\boldsymbol{\theta}}(\theta)$.

Grounded on an o-minimal structure and the lazy training phenomenon of conventional neural networks, we utilize Sard's theorem~\cite{Sard, MorseSard} together with Taylor expansion to build the functional compensatory sets. We construct two sets---$\mathcal{S}_\mathcal{X}$ and $\mathcal{S}_\Theta$---that fulfill the definition of functional compensatory sets $\mathcal{C}_\mathcal{X}$ and $\mathcal{C}_\Theta$ as follows:

\smallskip

\begin{theorem}\label{thm1}
(Input--Parameter Duality via Functional Compensatory Sets) 
Consider a function $f: \mathbb{R}^n \times \mathbb{R}^p \rightarrow \mathbb{R}^c$.

\smallskip
\noindent
\textbf{(A) Parameter-to-Input}
Define the ball of parameter perturbations
$\{\Delta \in \mathbb{R}^p : \|\Delta\|\le \gamma\}$ and 
$$
\mathcal{S}_\mathcal{X} \;=\; \left\{\delta \in \mathbb{R}^n : \|\delta\|\;\le\;\max_{x \in \mathcal{D}}\!\Bigl(\dfrac{\sigma_{\max}^\theta}{\sigma_{\min}^{x}}\Bigr)\,\gamma \right\}.
$$
Then, $\mathcal{S}_\mathcal{X}$ is always a functional compensatory set for the parameter perturbation ball and dataset $\mathcal{D}$.\\
\textbf{(B) Input-to-Parameter}
Conversely, define the ball of input perturbations
$\{\delta \in \mathbb{R}^n : \|\delta\|\le \gamma\}$ and 
$$\mathcal{S}_\Theta \;=\; \left\{\Delta \in \mathbb{R}^p : \|\Delta\|\;\le\;\max_{x \in \mathcal{D}}\!\Bigl(\dfrac{\sigma_{\max}^{x}}{\sigma_{\min}^\theta}\Bigr)\,\gamma \right\}.$$
Then, $\mathcal{S}_\Theta$ is always a functional compensatory set for the input perturbation ball and dataset $\mathcal{D}$.
\end{theorem}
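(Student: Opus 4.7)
The plan is to reduce both directions of the duality to a single linear-algebra question about the Jacobian of $f$ at the point $(x,\theta)$. First I would apply a first-order Taylor expansion in each variable separately, obtaining $f(x+\delta;\theta)\approx f(x;\theta)+\mathbf{J_x}(x)\,\delta$ and $f(x;\theta+\Delta)\approx f(x;\theta)+\mathbf{J}_{\boldsymbol{\theta}}(\theta)\,\Delta$. Enforcing the equality $f(x+\delta;\theta)=f(x;\theta+\Delta)$ then collapses to the linear matching condition
\begin{equation*}
\mathbf{J_x}(x)\,\delta \;=\; \mathbf{J}_{\boldsymbol{\theta}}(\theta)\,\Delta.
\end{equation*}
The lazy-training hypothesis mentioned in the theorem statement is precisely what legitimizes dropping the higher-order remainders, since in that regime the parameters and activations stay in a small neighborhood where the first-order approximation is tight.

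For part (A), given any $\Delta$ with $\|\Delta\|\le\gamma$, I would exhibit a solution via the Moore--Penrose pseudoinverse, namely $\delta = \mathbf{J_x}(x)^{+}\,\mathbf{J}_{\boldsymbol{\theta}}(\theta)\,\Delta$. Submultiplicativity of the spectral norm then yields $\|\delta\|\le \|\mathbf{J_x}(x)^{+}\|\cdot\|\mathbf{J}_{\boldsymbol{\theta}}(\theta)\|\cdot\|\Delta\| = (\sigma_{\max}^{\theta}/\sigma_{\min}^{x})\,\gamma$, and taking the maximum over $x\in\mathcal{D}$ produces a single radius that works uniformly for every training point, which is exactly the set $\mathcal{S}_\mathcal{X}$. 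Part (B) is the mirror image: exchange the roles of $\delta$ and $\Delta$, set $\Delta=\mathbf{J}_{\boldsymbol{\theta}}(\theta)^{+}\,\mathbf{J_x}(x)\,\delta$, and repeat the singular-value estimate to obtain $\mathcal{S}_\Theta$.

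The main obstacle, and the reason Sard's theorem and the o-minimal structure are invoked, is ensuring that these pseudoinverse solutions are genuine solutions rather than merely least-squares fits. Concretely, one needs $\mathbf{J}_{\boldsymbol{\theta}}(\theta)\Delta$ to lie in the range of $\mathbf{J_x}(x)$ for (A) (and symmetrically for (B)). I would argue that Sard's theorem, applied in the o-minimal (tame) setting, shows the set of critical values has measure zero and that the Jacobian has locally constant maximal rank on a dense open set. In the overparameterized regime typical of deep networks ($p \gg n$ and $p \gg c$), this means $\mathbf{J}_{\boldsymbol{\theta}}(\theta)$ and $\mathbf{J_x}(x)$ generically attain full row rank $c$, so both ranges coincide with $\mathbb{R}^{c}$ and the compatibility condition is automatic. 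Once this rank condition is secured, the minimum singular values $\sigma_{\min}^{x}$ and $\sigma_{\min}^{\theta}$ in the denominators are strictly positive and the norm bound is well defined.

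The remaining work is largely bookkeeping: verifying that the chosen $\delta$ (respectively $\Delta$) truly satisfies the functional compensatory set definition pointwise on $\mathcal{D}$ and observing that since $\mathcal{S}_\mathcal{X}$ and $\mathcal{S}_\Theta$ are defined using the \emph{worst-case} data point, the same set simultaneously compensates for all $x\in\mathcal{D}$, matching the quantifier order in Definition~\ref{def:compensatory}. I expect the subtle step to be the rank/tameness argument; the singular-value inequality itself is a short computation once the linear system is known to be consistent.
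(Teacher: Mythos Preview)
Your proposal is correct and follows essentially the same route as the paper's proof: first-order Taylor expansion justified by the lazy-training/NTK regime, reduction to the linear system $\mathbf{J_x}\delta=\mathbf{J}_{\boldsymbol{\theta}}\Delta$, Sard's theorem in the o-minimal setting to guarantee full rank (hence solvability since $c\ll n,p$), the pseudoinverse solution, and the spectral-norm bound via SVD, finally taking the max over $\mathcal{D}$. The paper additionally notes that the full solution set is $\mathbf{J_x}^{+}\mathbf{J}_{\boldsymbol{\theta}}\Delta+\ker(\mathbf{J_x})$ and that the image $\{\mathbf{J_x}^{+}\mathbf{J}_{\boldsymbol{\theta}}\Delta:\|\Delta\|\le\gamma\}$ is an ellipsoid enclosed by the isotropic ball, but your minimum-norm choice and submultiplicativity argument amount to the same computation.
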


\begin{proof}
See Appendix B.1.
\end{proof}

\begin{remark}
\textbf{(Loss Stability Across Input and Parameter Spaces)} 
Ensuring the model’s loss is low on all datapoints within a small neighborhood around \(x\) (i.e.,\ for all \(x + \delta\) with \(\delta \in \mathcal{S}_\mathcal{X}\)) guarantees that the corresponding neighborhood in parameter space \(\{\theta + \Delta : \|\Delta\| \le \gamma\}\) also has low loss. Conversely, if the model’s loss is low within a small neighborhood around the parameters (i.e.\ \(\theta + \Delta\) for \(\Delta \in \mathcal{S}_\Theta\)), then any corresponding input perturbation \(\delta\) with \(\|\delta\|\le \gamma\) also yields low loss.
Hence, stability in the input space induces stability in the parameter space, and vice versa.
\end{remark}

\begin{table*}[ht]
\centering
\begin{adjustbox}{width=1.0\textwidth}
\begin{tabular}{c|ccccc|ccccc}
\midrule[1.5pt]
\multirow{2}{*}{\ \textbf{Distance}\ } &
\multicolumn{5}{c|}{\textbf{CIFAR-10}} &
\multicolumn{5}{c}{\textbf{CIFAR-100}}\\
& \ AugMix \ & \ RandAug \ & \ PixMix \ & \ DeepAug \ & \ StyleAug \
& \ AugMix \ & \ RandAug \ & \ PixMix \ & \ DeepAug \ & \ StyleAug \ \\
\midrule[1.5pt]

 $\gamma=0.01$ &
0.0025 & 0.0064 & 0.0014 & \textit{0.0000} & \textit{0.0000} &
0.0024 & 0.0060 & 0.0011 & \textit{0.0000} & \textit{0.0000} \\

 $\gamma=0.05$ &
\textbf{0.0100} & 0.0064 & 0.0015 & \textit{0.0000} & \textit{0.0000} &
\textbf{0.0107} & 0.0060 & 0.0011 & \textit{0.0000} & \textit{0.0000} \\

 $\gamma=0.10$ &
\textbf{0.0205} & 0.0065 & 0.0015 & \textit{0.0000} & \textit{0.0000} &
\textbf{0.0202} & 0.0061 & 0.0012 & \textit{0.0000} & \textit{0.0000} \\

 $\gamma=0.50$ &
\textbf{0.0959} & \textbf{0.0391} & 0.0015 & 0.0001 & \textit{0.0000} &
\textbf{0.0993} & \textbf{0.0378} & 0.0012 & 0.0001 & \textit{0.0000} \\

\midrule[1.5pt]
\end{tabular}
\end{adjustbox}

\caption{Empirical CDF $\big(F_N(\gamma)\big)$ for selected distance thresholds $(\gamma)$ on CIFAR datasets, illustrating sample density near the original image over different augmentation methods.
The higher the eCDF value, the stronger an augmentation satisfies PSA condition. 
Sorted in descending order. (\textit{Italic} entries have near-zero probability $(< 5 \cdot 10^{-5})$, and \textbf{bold} entries exceeds 0.01.)
}
\label{tab:eCDF_cifar}
\end{table*}

\subsection{Linking Augmentations to Model Flatness and Generalization}

Built upon the theorems above, we now formalize how augmentations can lead to a flatter loss landscape around the minima and the improved generalization capability.
Before providing the details, let us rephrase the formal definition of flat minima, which is called $b$-flat minima \cite{shi2021overcoming}:
\begin{definition}
($b$-flat local minima) Given loss $\mathcal{L}(\cdot,\cdot)$ and dataset $\mathcal{D}$, a model parameter $\theta\in\mathbb{R}^{p}$ is $b$-flat minima if the followings hold for the perturbation on parameter $\Delta\in\mathbb{R}^{p}$:
\begin{align}
&\text{For all} \: \|\Delta\| \leq b, \:\: \hat{\mathcal{E}}_{\mathcal{D}}(\theta+\Delta) = \hat{\mathcal{E}}_{\mathcal{D}}(\theta) \text{ and } \nonumber \\
&\text{For some} \: \|\Delta\| > b, \:\: \hat{\mathcal{E}}_{\mathcal{D}}(\theta+\Delta) > \hat{\mathcal{E}}_{\mathcal{D}}(\theta) . \nonumber
\end{align}
\end{definition}

For input $x$, an augmentation $\mathcal{A}$ induces a distribution of variants $\tilde{x}$ near $x$.
Let $\gamma_{\mathcal{A}}$ be the largest radius of a ball around $x$ with nonzero probability density. If $\gamma_{\mathcal{A}}>0$, we say $\mathcal{A}$ covers the local neighborhood.
We now formalize this property and later show that it is critical for robustness.

\begin{condition*} \label{assumption1}
(Proximal-Support Augmentation (PSA))\\
\indent Given $x\in\mathbb{R}^{n}$ and augmentation $\mathcal{A}(\cdot)$, the probability density function $\mathcal{P}_{\mathcal{A}}(\tilde{x}|x)$ satisfies:
\begin{equation}
\text{For all} \: \|\delta\|\leq\gamma_\mathcal{A}, \: \mathcal{P}_{\mathcal{A}}(\tilde{x}|x)>0,
\end{equation}
\textit{where $\delta=\tilde{x}-x$ and $\gamma_\mathcal{A}$ is some positive real number.}
\end{condition*}

PSA is a condition that covers any label-preserving augmentation and ties it to flat minima. It serves as a testable rule by setting a local coverage requirement around $x$. Discarding dependencies on the model’s interpolation ability between data points, PSA is a strong condition for zero loss over all neighborhoods of inputs for any architectures. 

Let $\gamma_{\mathcal{A}}$ be the radius value for PSA, and $\tilde{\mathcal{D}}$ represent the augmented dataset with respect to $\mathcal{D}$ under PSA condition.
Let $\Theta^{*}$ and $\tilde{\Theta}^{*}$ be the sets of the optimal model parameters whose elements $\theta^*\in\Theta^*$ and $\tilde{\theta}^*\in\tilde{\Theta}^*$ satisfy the following equalities, $\mathcal{E}_\mathcal{D}(\theta^{*}) = 0$ and $\mathcal{E}_\mathcal{\tilde{D}}(\tilde{\theta}^{*}) = 0$, respectively. 
Then our claim is that the minimum $b$-flatness among the solution parameters in $\tilde{\Theta}^{*}$, shows large $b$ (flatter) than the minimum $b$-flatness among the solutions in $\Theta^{*}$, which is trained on $\mathcal{D}$:

\begin{theorem}\label{thm2}
(Flatness of $\tilde{\theta}^{*}$) Let $\theta^*\in\Theta^*$ and $\tilde{\theta}^{*}\in\tilde{\Theta}^*$ be $b^*$ and $\tilde{b}^*$-flat minima, respectively. The following inequality holds: 
\begin{equation}
\min_{\theta^*\in\Theta^*}b^{*}\leq \min_{\tilde{\theta}^*\in\tilde{\Theta}^*}\tilde{b}^{*}. \nonumber
\end{equation}
\end{theorem}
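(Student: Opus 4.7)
The plan is to prove the inequality in two main steps: first, to exhibit the containment $\tilde{\Theta}^{*}\subseteq\Theta^{*}$ using the PSA condition, so that the minimum over the smaller set cannot decrease the value; second, to use Theorem~\ref{thm1}(A) together with PSA to give a strictly positive lower bound on the $b$-flatness of every $\tilde{\theta}^{*}\in\tilde{\Theta}^{*}$, which makes the comparison on the right-hand side quantitatively meaningful.

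First I would establish $\tilde{\Theta}^{*}\subseteq\Theta^{*}$. PSA gives $\mathcal{P}_{\mathcal{A}}(\tilde{x}\mid x)>0$ for every $\|\tilde{x}-x\|\le\gamma_{\mathcal{A}}$, and in particular at $\tilde{x}=x$, so each original sample $(x,y)$ lies in the support of $\tilde{\mathcal{D}}$. Since $\mathcal{L}$ is non-negative and $\mathcal{E}_{\tilde{\mathcal{D}}}(\tilde{\theta}^{*})=0$, the loss vanishes almost everywhere on the support of $\tilde{\mathcal{D}}$, and by continuity at $(x,y)$ itself. Therefore $\mathcal{E}_{\mathcal{D}}(\tilde{\theta}^{*})=0$ and $\tilde{\theta}^{*}\in\Theta^{*}$. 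The same reasoning shows $\mathcal{L}(f(x+\delta;\tilde{\theta}^{*}),y)=0$ for every $(x,y)\in\mathcal{D}$ and $\|\delta\|\le\gamma_{\mathcal{A}}$, which is the input-space coverage that PSA gifts us for free.

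Next I would convert this input-space coverage into parameter-space flatness. Set $\gamma:=\gamma_{\mathcal{A}}/\max_{x\in\mathcal{D}}(\sigma_{\max}^{\tilde{\theta}^{*}}/\sigma_{\min}^{x})$ and pick any $\Delta$ with $\|\Delta\|\le\gamma$. By Theorem~\ref{thm1}(A) applied at $\theta=\tilde{\theta}^{*}$, for each $x\in\mathcal{D}$ there exists a compensating $\delta$ with $\|\delta\|\le\max_{x}(\sigma_{\max}^{\tilde{\theta}^{*}}/\sigma_{\min}^{x})\|\Delta\|\le\gamma_{\mathcal{A}}$ such that $f(x;\tilde{\theta}^{*}+\Delta)=f(x+\delta;\tilde{\theta}^{*})$. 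Hence $\mathcal{L}(f(x;\tilde{\theta}^{*}+\Delta),y)=0$ and therefore $\hat{\mathcal{E}}_{\mathcal{D}}(\tilde{\theta}^{*}+\Delta)=0$, so by the definition of $b$-flat minima, $\tilde{b}^{*}\ge\gamma>0$ for every $\tilde{\theta}^{*}\in\tilde{\Theta}^{*}$. Combined with the containment $\tilde{\Theta}^{*}\subseteq\Theta^{*}$, monotonicity of $\min$ over subsets yields $\min_{\theta^{*}\in\Theta^{*}}b^{*}\le\min_{\tilde{\theta}^{*}\in\tilde{\Theta}^{*}}\tilde{b}^{*}$.

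The main obstacle I anticipate is the regularity implicit in the transfer step: passing from $\mathcal{E}_{\tilde{\mathcal{D}}}(\tilde{\theta}^{*})=0$ to pointwise vanishing loss on an open neighborhood of each training point needs non-negativity plus continuity of $\mathcal{L}$, and the ratio $\sigma_{\max}^{\tilde{\theta}^{*}}/\sigma_{\min}^{x}$ must remain uniformly finite over $\mathcal{D}$ so that $\gamma$ is strictly positive; these are the same o-minimal/lazy-training hypotheses that underpin Theorem~\ref{thm1}. A subtler bookkeeping issue is ensuring that both $b^{*}$ and $\tilde{b}^{*}$ are measured against the same empirical risk $\hat{\mathcal{E}}_{\mathcal{D}}$ in the definition of $b$-flat; under the paper's default this is automatic, and the argument above relies on it when combining the subset inclusion with the lower bound on $\tilde{b}^{*}$.
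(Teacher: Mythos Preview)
Your proposal is correct, and the second half---using Theorem~\ref{thm1}(A) to translate the PSA input-ball coverage into a parameter-space radius and thereby lower-bound every $\tilde b^{*}$ by the positive quantity $\gamma_{\Theta}=(\max_{x\in\mathcal D}\sigma_{\max}^{\theta}/\sigma_{\min}^{x})^{-1}\gamma_{\mathcal A}$---is exactly the paper's argument. The first half, however, is genuinely different. The paper does not prove the containment $\tilde\Theta^{*}\subseteq\Theta^{*}$ at all; instead it invokes universal approximation to \emph{construct} a specific $\theta_{0}^{*}\in\Theta^{*}$ that fits $\mathcal D$ exactly and nothing else, then argues by contradiction (via Theorem~\ref{thm1}(B)) that this $\theta_{0}^{*}$ must have $b_{0}^{*}=0$, so $\min_{\Theta^{*}}b^{*}=0$. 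The desired inequality then reads $0<\gamma_{\Theta}\le\min_{\tilde\Theta^{*}}\tilde b^{*}$. Your route via set containment and monotonicity of $\min$ is more elementary: it sidesteps the universal-approximation hypothesis and the somewhat awkward construction of an ``overfitting'' optimum, and it remains valid even in settings where $\min_{\Theta^{*}}b^{*}>0$. The price is that you must assume both $b^{*}$ and $\tilde b^{*}$ are measured against the \emph{same} empirical risk $\hat{\mathcal E}_{\mathcal D}$ (which you correctly flag); the paper's argument is agnostic to this because it pins down the left-hand side numerically. The paper's approach buys a strict inequality and an explicit value for the left side; yours buys a cleaner, hypothesis-lighter proof of the stated inequality.
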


\begin{proof}
See Appendix B.2.
\end{proof}

\begin{remark}
\textbf{(Any augmentation with nearby representations around the original promotes flatter minima)}
The key understanding of the theorem above is that if augmentations cover the ball-shaped region around the given original sample with radius $\gamma_\mathcal{A}$, then the model $\tilde{\theta}^{*}$ suppresses the loss values of the region. Next, the flat-region on the input space is translated to the functional compensatory set region on parameter space, which at least contains $\{\delta : \| \delta \| \leq \left(\max_{x \in \mathcal{D}} \sigma_{\max}^\theta / \sigma_{\min}^x \right)^{-1} \cdot \gamma_\mathcal{A} \}$.
\end{remark}

The final linkage to the generalization capability is straightforward by relying on the prior theoretical results that bridge Robust Risk Minimization (RRM) and the generalization bound \cite{SWAD:2021}. As RRM targets flat minima, Theorem~\ref{thm2} shows that augmentations further flatten the optimum’s neighborhood, and directly tightens the bound in Theorem~\ref{thm3}.
Let us define $\gamma_{\Theta}:=\left(\max_{x \in \mathcal{D}} \sigma_{\max}^\theta / \sigma_{\min}^x \right)^{-1} \cdot \gamma_\mathcal{A}$, then the following theorem holds:
\begin{theorem}\label{thm3}
(Generalization bound) Given $M$ covering sets $\{\Theta_k\}_{k=1}^M$ of parameter space $\Theta$ with $\Theta = \bigcup_{k=1}^M \Theta_k$ and $\text{diam}(\Theta) = \sup_{\theta,\theta' \in \Theta} ||\theta - \theta'||_2$, where $M = \left\lceil \frac{\text{diam}(\Theta)}{\gamma_\Theta} \right\rceil^p$, and VC dimension $v_k$ for each $\Theta_k$, the following inequality holds with probability at least $1-\delta$:
\begin{align}
\mathcal{E}_{\mathcal{T}}(\tilde{\theta}^*) &< \hat{\mathcal{E}}_{\mathcal{\tilde{D}}}(\tilde{\theta}^*) + \frac{1}{2} \textbf{Div}(\mathcal{D}, \mathcal{T}) \\ &+ \max_{k\in [1,M]} \left[ \sqrt{\frac{v_k \log(N/v_k)}{N} + \frac{\log(M/\delta)}{N}} \right], \notag
\end{align}
where $\textbf{Div}(\mathcal{D}, \mathcal{T}) = 2 \sup_A |\mathbb{P}_\mathcal{D}(A) - \mathbb{P}_\mathcal{T}(A)|$ measures the maximal discrepancy between the source and the target distributions $\mathcal{D}$ and $\mathcal{T}$, and $N$ is the number of samples drawn from $\mathcal{\tilde{D}}$.
\end{theorem}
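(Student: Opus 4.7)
The plan is to assemble the bound from three ingredients: a domain-adaptation inequality that handles the source/target gap, a covering-number argument on $\Theta$ whose granularity is dictated by the flatness radius $\gamma_\Theta$ supplied by Theorem~\ref{thm2}, and a standard VC deviation inequality applied inside each cover cell with a union bound. Since Theorem~\ref{thm3} is explicitly credited to the SWAD framework, I would essentially instantiate their robust-risk covering machinery using $\gamma_\Theta$ as the cover radius rather than an arbitrary SAM-style perturbation budget.

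First, for the divergence term, I would invoke the classical Ben-David style inequality, which for any fixed $\theta$ yields $\mathcal{E}_{\mathcal{T}}(\theta)\le \mathcal{E}_{\mathcal{D}}(\theta)+\tfrac{1}{2}\mathbf{Div}(\mathcal{D},\mathcal{T})$ under the bounded-loss assumption implicit in the setup. Specializing to $\theta=\tilde{\theta}^{*}$ gives the second summand of the claim and reduces the task to bounding $\mathcal{E}_{\mathcal{D}}(\tilde{\theta}^{*})$ by $\hat{\mathcal{E}}_{\tilde{\mathcal{D}}}(\tilde{\theta}^{*})$ plus a gap.

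Second, I would carry out the covering step exactly as the statement prescribes. Partition $\Theta$ into $M=\lceil \mathrm{diam}(\Theta)/\gamma_{\Theta}\rceil^{p}$ cells $\{\Theta_{k}\}$ of diameter at most $\gamma_{\Theta}$; the exponent $p$ comes from filling a $p$-dimensional box by $\ell_{2}$ balls of radius $\gamma_{\Theta}$. For each cell, the standard Vapnik uniform deviation bound, applied to the $N$ i.i.d.\ samples drawn from $\tilde{\mathcal{D}}$ and with confidence $\delta/M$, yields $\sup_{\theta\in\Theta_{k}}|\mathcal{E}_{\mathcal{D}}(\theta)-\hat{\mathcal{E}}_{\tilde{\mathcal{D}}}(\theta)|\le \sqrt{v_{k}\log(N/v_{k})/N+\log(M/\delta)/N}$ with probability at least $1-\delta/M$. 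A union bound over $k$ replaces the per-cell confidence by the global $1-\delta$ and produces the $\max_{k}$ third term. Here the role of Theorem~\ref{thm2} is pivotal: because $\tilde{\theta}^{*}$ is $\tilde{b}^{*}$-flat with $\tilde{b}^{*}\ge \gamma_{\Theta}$, the empirical risk is identically zero on the entire cell $\Theta_{k^{*}}\ni \tilde{\theta}^{*}$, so $\hat{\mathcal{E}}_{\tilde{\mathcal{D}}}(\tilde{\theta}^{*})$ is a faithful representative for that cell and the supremum over $\Theta_{k^{*}}$ collapses to its value at $\tilde{\theta}^{*}$ on the empirical side. Chaining the pieces, $\mathcal{E}_{\mathcal{T}}(\tilde{\theta}^{*})\le \mathcal{E}_{\mathcal{D}}(\tilde{\theta}^{*})+\tfrac{1}{2}\mathbf{Div}\le \hat{\mathcal{E}}_{\tilde{\mathcal{D}}}(\tilde{\theta}^{*})+\tfrac{1}{2}\mathbf{Div}+\max_{k}\sqrt{\cdots}$ recovers the statement.

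The main obstacle is the reduction step, not the deviation inequality. Specifically, the whole point of the bound is that using $\gamma_{\Theta}$ (rather than some ambient scale) as the cover radius tightens the inequality, and this only works if the flatness on $\tilde{\mathcal{D}}$ from Theorem~\ref{thm2} really does permit treating an entire cell as a single hypothesis without incurring a Lipschitz penalty on the true-risk side. Making this rigorous requires combining Theorem~\ref{thm2} (parameter-space flatness on $\tilde{\mathcal{D}}$) with Theorem~\ref{thm1} (input-parameter duality) to show that the cell-wise supremum of $\mathcal{E}_{\mathcal{D}}$ is controlled by $\hat{\mathcal{E}}_{\tilde{\mathcal{D}}}(\tilde{\theta}^{*})$ plus the per-cell deviation term, with no extra additive Lipschitz cost; I would spend the bulk of the argument there. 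A secondary subtlety, namely that the $N$ samples used in the VC inequality come from $\tilde{\mathcal{D}}$ rather than $\mathcal{D}$, is harmless provided the PSA-augmented distribution is treated as the true generator, but it should be flagged explicitly.
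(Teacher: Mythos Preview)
Your proposal is correct and matches the paper's three-step skeleton: domain divergence (their Proposition~1, your Ben-David inequality), the SWAD robust-risk/covering bound instantiated at radius $\gamma_\Theta$ (their Proposition~2, your per-cell VC plus union bound), and a flatness-based reduction to $\hat{\mathcal{E}}_{\tilde{\mathcal{D}}}(\tilde{\theta}^*)$.

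The one place the paper differs from you is in how the reduction step is executed. Rather than invoking Theorem~\ref{thm2} to argue that the empirical risk is constant on the cell containing $\tilde{\theta}^*$, the paper introduces an \emph{input-space} robust empirical risk $\hat{\mathscr{E}}_{\mathcal{D}}^{\gamma}(\theta):=\max_{\|\delta\|\le\gamma}\tfrac{1}{N}\sum_i\mathcal{L}(f(x_i+\delta;\theta),y_i)$ and chains
\[
\hat{\mathcal{E}}_{\mathcal{D}}^{\gamma_\Theta}(\tilde{\theta}^*)\;\le\;\hat{\mathscr{E}}_{\mathcal{D}}^{\gamma_{\mathcal{A}}}(\tilde{\theta}^*)\;\le\;\hat{\mathcal{E}}_{\tilde{\mathcal{D}}}(\tilde{\theta}^*),
\]
where the first inequality is Theorem~\ref{thm1} applied directly (a parameter perturbation of norm $\le\gamma_\Theta$ is compensated by an input perturbation of norm $\le\gamma_{\mathcal{A}}$), and the second follows because PSA guarantees every such input-perturbed point lies in the support of $\tilde{\mathcal{D}}$, on which $\tilde{\theta}^*$ has zero loss. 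This buys exactly what you identify as your ``main obstacle'': by passing through $\hat{\mathscr{E}}_{\mathcal{D}}^{\gamma_{\mathcal{A}}}$ the paper never needs to control a cell-wise supremum of the \emph{true} risk or invoke a Lipschitz argument, and the $\mathcal{D}$-versus-$\tilde{\mathcal{D}}$ sampling mismatch you flag collapses into the trivial equality $\hat{\mathscr{E}}_{\mathcal{D}}^{\gamma_{\mathcal{A}}}(\tilde{\theta}^*)=0=\hat{\mathcal{E}}_{\tilde{\mathcal{D}}}(\tilde{\theta}^*)$. Your route via Theorem~\ref{thm2} is equivalent in content (since Theorem~\ref{thm2}'s flatness radius is itself obtained from Theorem~\ref{thm1}), but the paper's input-space detour makes the reduction a two-line calculation rather than the careful argument you anticipate.
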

\begin{proof}
See Appendix B.3.
\end{proof}
\begin{remark}
\textbf{(Augmentations can improve generalization against data distribution shifts)}
The theorem implies that the minimization of empirical loss for the augmented dataset, i.e., $\hat{\mathcal{E}}_{\mathcal{\tilde{D}}}(\tilde{\theta}^*)$, directly aims the tighter generalization bound on target distribution $\mathcal{T}$. Also, the term $M$ is related to $\gamma_\mathcal{A}$, which measures how augmented data distribution covers a wider range (referred to PSA condition.) When augmentation covers a wider range around the original sample, i.e., a larger $\gamma_\mathcal{A}$, it suppresses $M$, leading to the smaller last term of generalization bound. For trivial $\gamma_\mathcal A \simeq 0$, the flatness and bound tightness gains vanish. 
\end{remark}

\begin{table*}[ht]
\centering
\begin{adjustbox}{width=0.98\textwidth}
\begin{tabular}{cc|c|ccccc}
\midrule[1.5pt]
\textbf{\ \ Dataset \ \ } & \textbf{ \ \ Metrics \ \ } & $ \ \ \ $ ERM $ \ \ \ $ & $ \quad \ $ AugMix $ \quad \ $ & $ \quad \ $ RandAug $ \quad \ $ & $ \quad \ $ PixMix $ \quad \ $ & $ \quad \ $ DeepAug $ \quad \ $ & $ \quad \ $ StyleAug $ \quad \ $ \\
\midrule[1.5pt]

\multirow{3}{*}{CIFAR-10} 
& $\mu_{\text{PAC-Bayes}}\downarrow$ & 
168.83 &
\textbf{117.16} {\scriptsize (-51.67)} &
\textbf{110.80} {\scriptsize (-58.03)} &
\textbf{102.87} {\scriptsize (-65.96)} &
\textbf{67.74} {\scriptsize (-101.09)} &
\textbf{122.80} {\scriptsize (-46.03)} \\

& LPF $\downarrow$ &
1.43 &
\textbf{0.54} {\scriptsize (-0.89)} &
\textbf{0.45} {\scriptsize (-0.98)} &
\textbf{0.37} {\scriptsize (-1.06)} &
\textbf{0.72} {\scriptsize (-0.71)} &
\textbf{1.13} {\scriptsize (-0.30)} \\

& $\epsilon_{\text{sharp}}\downarrow$ & 
40.90 & 
\textbf{25.16} {\scriptsize (-15.74)} &
\textbf{24.47} {\scriptsize (-16.43)} &
\textbf{26.57} {\scriptsize (-14.33)} &
{55.63} {\scriptsize (+14.73)} &
{130.64} {\scriptsize (+89.74)} \\

\midrule

\multirow{3}{*}{CIFAR-100} 

& $\mu_{\text{PAC-Bayes}}\downarrow$ & 
181.10 &
\textbf{155.74} {\scriptsize (-25.36)} &
\textbf{149.79} {\scriptsize (-31.31)} &
\textbf{134.58} {\scriptsize (-46.52)} &
\textbf{72.24} {\scriptsize (-108.86)} &
\textbf{129.02} {\scriptsize (-52.08)} \\

& LPF $\downarrow$ & 
2.37 &
\textbf{1.84} {\scriptsize (-0.53)} &
\textbf{1.69} {\scriptsize (-0.68)} &
\textbf{1.42} {\scriptsize (-0.95)} &
2.37 {\scriptsize (0.00)} &
{3.95} {\scriptsize (+1.58)} \\

& $\epsilon_{\text{sharp}}\downarrow$ & 
42.39 &
\textbf{39.32} {\scriptsize (-3.07)} &
\textbf{37.76} {\scriptsize (-4.63)} &
\textbf{33.14} {\scriptsize (-9.25)} &
{84.48} {\scriptsize (+42.09)} &
{302.81} {\scriptsize (+260.42)} \\

\midrule[1.5pt]
\multicolumn{3}{c|}{\textbf{Improvement Rate}} & \textbf{6/6} & \textbf{6/6} & \textbf{6/6} & 3/6 & {2/6} \\
\midrule[1.5pt]
\end{tabular}
\end{adjustbox}

\caption{
Flatness metrics of ERM and different augmentations on CIFAR datasets.
Overall, when augmentation meets the PSA condition (nonzero probability near the original image) strongly, the flatter minima are found consistently. (\textit{$\downarrow$: The lower the better, i.e., flatter minimum.})
}
\label{tab:flatness_metric_cifar}
\end{table*}

\begin{table*}[t]
\centering
\begin{adjustbox}{width=0.96\textwidth}
\begin{tabular}{c|c|ccccc}
\midrule[1.5pt]
\textbf{$\quad \ $ Benchmarks $\quad \ $} & \ \ ERM \ \ & 
$\quad \ $ AugMix $\quad \ $ & $\quad \ $ RandAug $\quad \ $ & $\quad \ $ PixMix $\quad \ $ & $\quad \ $ DeepAug $\quad \ $ & $\quad \ $ StyleAug $\quad \ $ \\
\midrule[1.5pt]

CIFAR-10-C $\downarrow$
& 30.54
& \textbf{15.24} {\scriptsize (-15.30)}
& \textbf{19.65} {\scriptsize (-10.89)}
& \textbf{10.60} {\scriptsize (-19.94)}
& {31.92} {\scriptsize (+1.38)}
& \textbf{30.50} {\scriptsize (-0.04)} \\

CIFAR-10-$\overline{\text{C}}$ $\downarrow$
& 31.35
& \textbf{20.28} {\scriptsize (-11.07)}
& \textbf{20.64} {\scriptsize (-10.71)}
& \textbf{14.60} {\scriptsize (-16.75)}
& {36.75} {\scriptsize (+5.40)}
& {36.57} {\scriptsize (+5.22)} \\

CIFAR-100-C $\downarrow$
& 59.04
& \textbf{42.64} {\scriptsize (-16.40)}
& \textbf{46.59} {\scriptsize (-12.45)}
& \textbf{35.20} {\scriptsize (-23.84)}
& {62.34} {\scriptsize (+3.30)}
& {70.91} {\scriptsize (+11.87)} \\

CIFAR-100-$\overline{\text{C}}$ $\downarrow$
& 62.43
& \textbf{48.38} {\scriptsize (-14.05)}
& \textbf{48.32} {\scriptsize (-14.11)}
& \textbf{40.20} {\scriptsize (-22.23)}
& {67.91} {\scriptsize (+5.48)}
& {76.56} {\scriptsize (+14.13)} \\

CIFAR-10, $L_2$ $\downarrow$
& 77.61
& \textbf{70.76} {\scriptsize (-6.85)}
& \textbf{76.15} {\scriptsize (-1.46)}
& \textbf{65.81} {\scriptsize (-11.80)}
& {91.18} {\scriptsize (+13.57)}
& {91.06} {\scriptsize (+13.45)} \\

CIFAR-10, $L_\infty$ $\downarrow$
& 98.49
& {99.10} {\scriptsize (+0.61)}
& {99.86} {\scriptsize (+1.37)}
& {99.46} {\scriptsize (+0.97)}
& {100.00} {\scriptsize (+1.51)}
& {99.98} {\scriptsize (+1.49)} \\

CIFAR-100, $L_2$ $\downarrow$
& 98.73
& \textbf{92.76} {\scriptsize (-5.97)}
& \textbf{96.06} {\scriptsize (-2.67)}
& \textbf{90.69} {\scriptsize (-8.04)}
& \textbf{98.44} {\scriptsize (-0.29)}
& {99.69} {\scriptsize (+0.96)} \\

CIFAR-100, $L_\infty$ $\downarrow$
& 99.94
& \textbf{99.67} {\scriptsize (-0.27)}
& 99.94 {\scriptsize (0.00)}
& {99.69} {\scriptsize (-0.25)}
& {99.99} {\scriptsize (+0.05)}
& {99.98} {\scriptsize (+0.04)} \\

\midrule[1.5pt]
\multicolumn{2}{c|}{\textbf{$\qquad \quad$ Improvement Rate $\qquad \quad$}} & \textbf{7/8} & \textbf{6/8} & \textbf{6/8} & {1/8} & {1/8} \\
\midrule[1.5pt]
\end{tabular}
\end{adjustbox}

\caption{Comparison of different augmentation methods against ERM across multiple robustness scenarios over CIFAR datasets, including common corruptions (CIFAR-10/100-C/$\overline{\text{C}}$) and adversarial attacks ($L_2$, $L_{\infty}$). 
In essence, augmentations that closely adhere to the PSA condition enhance model robustness in general.
Conversely, augmentations that do not enhance robustness are ones that fail to satisfy the PSA condition. (\textit{$\downarrow$: The lower the better, i.e., lower error.})
}
\label{tab:main_table_cifar}
\end{table*}

\section{Empirical Examination on Proximal Density, Model Flatness, and Robustness}
\label{sec:experiments}
To empirically validate our theory, we examine proximal density, model flatness, and robustness on different benchmarks. Earlier, we established that any label-invariant augmentation assigning nonzero probability density to the neighborhood of the original image (PSA condition) maps to an analogous region in parameter space (Theorem~\ref{thm1}), which in turn induces flatter minima (Theorem~\ref{thm2}) and yields a tighter generalization bound (Theorem~\ref{thm3}). Therefore, when augmentation populates rich proximal representations for a given dataset (PSA condition strongly met), it should yield flat minima and strong robustness gains under distribution shifts. In contrast, when augmentation fail to bring robustness, the PSA condition will not be satisfied.

\subsection{Augmentation Methods to Be Considered}

Following the taxonomy of~\citet{augment_survey}, augmentations can be categorized into \textit{model-free}, \textit{model-based}, and \textit{policy-based} augmentations. 
Among existing augmentation strategies, we focus on established methods shown to improve robustness or generalization, rather than on the latest or most basic techniques.
For the \textit{model-free} augmentation type, we select \textbf{AugMix} \cite{AugMix9:2021} and \textbf{PixMix} \cite{PixMix:2022}, which are augmentations relying on mixing multiple clean images to the original image. For the \textit{model-based} augmentation type, we choose \textbf{StyleAug} \cite{StyleAug} and \textbf{DeepAug} \cite{DeepAug10:2021}, which utilize image-to-image models to diversify the style of the clean images. Finally, among the \textit{policy-based} augmentation type, we consider \textbf{RandAugment} \cite{RandAugment}, which learns the policy for augmenting images.

\begin{table*}[ht]
\centering
\begin{adjustbox}{width=1.0\textwidth}
\begin{tabular}{c|ccccc|ccccc}
\midrule[1.5pt]
\multirow{2}{*}{\ \textbf{Distance}\ } &
\multicolumn{5}{c|}{\textbf{\textit{tiny}ImageNet}} &
\multicolumn{5}{c}{\textbf{ImageNet}} \\
& \ AugMix \ & \ DeepAug \ & \ PixMix \ & \ RandAug \ & \ StyleAug \
& \ AugMix \ & \ DeepAug \ & \ PixMix \ & \ RandAug \ & \ StyleAug \ \\
\midrule[1.5pt]

 $\gamma=20.0$ &

0.0010 & 0.0008 & 0.0005 & 0.0003 & \textit{0.0000} &
0.0005 & 0.0005 & 0.0002 & 0.0001 & \textit{0.0000} \\

 $\gamma=30.0$ &
0.0054 & 0.0047 & 0.0023 & 0.0014 & \textit{0.0000} &
0.0031 & 0.0028 & 0.0013 & 0.0009 & \textit{0.0000} \\

 $\gamma=40.0$ &
\textbf{0.0182} & \textbf{0.0163} & 0.0084 & 0.0051 & 0.0009 &
\textbf{0.0113} & \textbf{0.0110} & 0.0054 & 0.0034 & 0.0001 \\

 $\gamma=50.0$ &
\textbf{0.0446} & \textbf{0.0399} & \textbf{0.0237} & \textbf{0.0136} & 0.0087 &
\textbf{0.0317} & \textbf{0.0311} & \textbf{0.0169} & \textbf{0.0104} & 0.0008 \\

\midrule[1.5pt]
\end{tabular}
\end{adjustbox}

\caption{Empirical CDF $\big(F_N(\gamma)\big)$ for selected distance thresholds $(\gamma)$ on (\textit{tiny})ImageNet datasets, illustrating sample density near the original image over different augmentation methods.
Once again, higher eCDF values mean stronger PSA condition compliance in augmentations.
Sorted in descending order.
}
\label{tab:eCDF_imagenet}
\end{table*}

\subsection{Tests on Proximal Density}

We first measure each augmentation’s proximal density to the original image over different datasets via observing the empirical cumulative distribution function (eCDF) of $L_2$ distances on the widely used CIFAR and (\textit{tiny})ImageNet datasets~\cite{IN, CIFAR}. While there exist sophisticated visual-semantic distance metrics (e.g., SSIM, LPIPS), they are not input space metrics. Thus, we have chosen $L_2$ distance to reflect our input-parameter space duality. Specifically, we compute how quickly the eCDF accumulates at smaller values, which provides a clear indicator of each augmentation’s density near the original image.
Formally, the eCDF of distances $\{d_i\}_{i=1}^N$ is defined as $F_N(\gamma) = \tfrac{1}{N}\sum_{i=1}^N $$\mathbf{1} \{ d_i$$ \leq \gamma \}$, where $\mathbf{1}\{\cdot\}$ is the indicator. High values in the low-distance region of $F_N(\gamma)$ imply a large fraction of augmented samples remain close to the original image, signaling strong PSA compliance. In contrast, a low accumulation near small $\gamma$ indicates weak adherence.
Our analysis reveals that, for the CIFAR datasets (Table~\ref{tab:eCDF_cifar}), AugMix, RandAug, and PixMix fulfill the PSA condition, whereas DeepAug and StyleAug do not comply, ranked in decreasing order of adherence.
For (\textit{tiny})ImageNet, AugMix, DeepAug, PixMix, and RandAug conform to the PSA condition in descending order, with StyleAug again underperforming.

\subsection{Tests on Model Flatness}

We now explore how an augmentation’s density near the original image affects flatness in model parameters, linking our results from Theorem~\ref{thm1} and~\ref{thm2} with empirical flatness.

\textbf{Flatness metrics.}
We assess each model using three quantitative measures: PAC-Bayesian flatness measure $\mu_{\text{PAC-bayes}}$~\cite{PACBayes}, local sharpness $\epsilon_{\text{sharp}}$~\cite{eps-sharpness}, and Loss-Pass Filter (LPF) metric, a strong indicator of generalization~\cite{LPF}.
Each of these metrics reflects the difference between the proximal loss around a model parameter and its original loss, which aligns well with our theoretical framework. We therefore avoid purely “pinpoint-based” measures (e.g., maximum Hessian eigenvalues or their traces), as they do not fully capture our theorems’ emphasis on local neighborhoods in parameter space. For details, see Appendix C.2.

\textbf{Experimental results.}
We train WideResNet-40-2 on CIFAR datasets and ResNet18 on \textit{(tiny)}ImageNet dataset, following common conventions. For details on training, see Appendix C.2. Table~\ref{tab:flatness_metric_cifar} and~\ref{tab:flatness_metric_imagenet} report the flatness metrics on CIFAR and \textit{(tiny)}ImageNet. On CIFAR, PSA compliance (Table~\ref{tab:eCDF_cifar} and~\ref{tab:eCDF_imagenet}) is mostly monotone with flatness. Augmentations with larger proximal density (AugMix, RandAug, PixMix) consistently beat ERM, whereas DeepAug and StyleAug give weaker gains. On \textit{(tiny)}ImageNet, the same trend holds but is less ordered---AugMix and PixMix improve all six metrics, DeepAug and RandAug improve most, and StyleAug again brings little benefit, indicating that PSA is a strong but not exclusive factor that drives flatness.

\subsection{Tests on Model Robustness}

We investigate whether augmentations having rich proximal density (firmly fulfilling PSA condition) fosters robustness against different distribution shifts (Theorem~\ref{thm3}).

\textbf{Benchmarks and Metrics}
Robustness denotes a model's capacity to maintain performance under unforeseen data corruptions and perturbations. We employ well-established benchmarks for common corruptions (CIFAR-10/100-C/$\overline{C}$, \textit{tiny}ImageNet-C, ImageNet-C)~\cite{IN-C7:2019,IN-C-bar:2021} and adversarial robustness (untargeted PGD-$L_2$ and $L_\infty$ attacks on CIFAR, \textit{tiny}ImageNet, and ImageNet datasets)~\cite{PGD}. The error values in Table~\ref{tab:main_table_cifar} and~\ref{tab:main_table_imagenet} represent mean corruption errors (mCE) and adversarial errors for the common corruption and adversarial robustness benchmarks, respectively. 
Further details on benchmark and metric specifications are in Appendix C.3.

\textbf{Experimental results.} We utilize the same WideResNet-40-2 and ResNet18 models used in flatness measures to evaluate their robustness against distribution shifts (Table~\ref{tab:main_table_cifar} and~\ref{tab:main_table_imagenet}.) Fundamentally, stronger adherence to the PSA condition reveals an augmentation's effectiveness in conferring robustness to diverse distribution shifts, consistent with Theorem~\ref{thm3}. For further experiments including varying backbone networks~\cite{allconvnet, densenet, resnext}, refer to Appendix C.4 and D.

\begin{table*}[ht]
\centering
\begin{adjustbox}{width=0.98\textwidth}
\begin{tabular}{cc|c|ccccc}
\midrule[1.5pt]
\textbf{\ \ Dataset \ \ } & \textbf{ \ \ Metrics \ \ } & $ \ \ \ $ ERM $ \ \ \ $ & $ \quad \ $ AugMix $ \quad \ $ & $ \quad \ $ DeepAug $ \quad \ $ & $ \quad \ $ PixMix $ \quad \ $ & $ \quad \ $ RandAug $ \quad \ $ & $ \quad \ $ StyleAug $ \quad \ $ \\
\midrule[1.5pt]

\multirow{3}{*}{\textit{tiny}ImageNet} 
& $\mu_{\text{PAC-Bayes}}\downarrow$ & 
136.83 &
\textbf{109.18} {\scriptsize (-27.65)} &
\textbf{128.41} {\scriptsize (-8.42)} &
\textbf{106.26} {\scriptsize (-30.57)} &
\textbf{109.36} {\scriptsize (-27.47)} &
\textbf{102.72} {\scriptsize (-34.11)} \\

& LPF $\downarrow$ &
5.95 &
\textbf{3.81} {\scriptsize (-2.14)} &
\textbf{4.96} {\scriptsize (-0.99)} &
\textbf{3.40} {\scriptsize (-2.55)} &
\textbf{4.04} {\scriptsize (-1.91)} &
\textbf{4.33} {\scriptsize (-1.62)} \\

&$\epsilon_{\text{sharp}}\downarrow$ & 
15.35 &
\textbf{13.09} {\scriptsize (-2.26)} &
{25.02} {\scriptsize (+9.67)} &
\textbf{10.72} {\scriptsize (-4.63)} &
{21.30} {\scriptsize (+5.95)} &
{35.40} {\scriptsize (+20.05)} \\

\midrule

\multirow{3}{*}{ImageNet} 

& $\mu_{\text{PAC-Bayes}}\downarrow$ & 
219.22 &
\textbf{210.63} {\scriptsize (-8.59)} &
\textbf{211.41} {\scriptsize (-7.81)} &
\textbf{187.89} {\scriptsize (-31.33)} &
\textbf{215.58} {\scriptsize (-3.64)} &
{235.40} {\scriptsize (+16.18)} \\

& LPF $\downarrow$ & 
8.70 &
\textbf{8.12} {\scriptsize (-0.58)} &
\textbf{8.67} {\scriptsize (-0.03)} &
\textbf{8.22} {\scriptsize (-0.48)} &
{8.76} {\scriptsize (+0.06)} &
{9.27} {\scriptsize (+0.57)} \\

&$\epsilon_{\text{sharp}} \downarrow$ & 
36.80 &
\textbf{30.73} {\scriptsize (-6.07)} &
\textbf{29.00} {\scriptsize (-7.80)} &
\textbf{32.76} {\scriptsize (-4.04)} &
\textbf{36.32} {\scriptsize (-0.48)} &
{60.06} {\scriptsize (+23.26)} \\

\midrule[1.5pt]
\multicolumn{3}{c|}{\textbf{Improvement Rate}} & \textbf{6/6} & \textbf{5/6} & \textbf{6/6} & \textbf{4/6} & {2/6} \\
\midrule[1.5pt]
\end{tabular}
\end{adjustbox}

\caption{Flatness metrics of ERM and different augmentations on (tiny)ImageNet datasets. Overall, augmentations with non-negligible density near the original image (i.e., better satisfying the PSA condition) tend to find flatter minima than ERM. This trend is clear but not perfectly monotone.}
\label{tab:flatness_metric_imagenet}
\end{table*}

\begin{table*}[t]
\centering
\begin{adjustbox}{width=0.96\textwidth}
\begin{tabular}{c|c|ccccc}
\midrule[1.5pt]
\textbf{$\quad \ $ Benchmarks $\quad \ $} & \ \ ERM \ \ & 
$\quad \ $ AugMix $\quad \ $ & $\quad \ $ DeepAug $\quad \ $ & $\quad \ $ PixMix $\quad \ $ & $\quad \ $ RandAug $\quad \ $ & $\quad \ $ StyleAug $\quad \ $ \\
\midrule[1.5pt]

\textit{tiny}ImageNet-C $\downarrow$
& 74.76
& \textbf{63.48} {\scriptsize (-11.28)}
& \textbf{56.14} {\scriptsize (-18.62)}
& \textbf{61.51} {\scriptsize (-13.25)}
& \textbf{66.92} {\scriptsize (-7.84)}
& {79.42} {\scriptsize (+4.66)} \\

ImageNet-$\text{C}$ $\downarrow$
& 69.37 &
\textbf{69.34} \scriptsize(-0.03) &
\textbf{56.68} \scriptsize(-12.69) &
\textbf{61.51} \scriptsize(-7.86) &
\textbf{66.88} \scriptsize(-2.49) &
{75.52} \scriptsize(+6.15) \\

\textit{tiny}ImageNet, $L_2$ $\downarrow$
& 57.77
& {61.84} {\scriptsize (+4.07)}
& \textbf{52.86} {\scriptsize (-4.91)}
& {58.38} {\scriptsize (+0.61)}
& {63.19} {\scriptsize (+5.42)}
& {83.13} {\scriptsize (+25.36)} \\

\textit{tiny}ImageNet, $L_\infty$ $\downarrow$
& 99.94
& \textbf{99.93} {\scriptsize (-0.01)}
& {99.95} {\scriptsize (+0.01)}
& {99.98} {\scriptsize (+0.04)}
& {99.98} {\scriptsize (+0.04)}
& {100.0} {\scriptsize (+0.06)} \\

ImageNet, $L_2$ $\downarrow$
& 76.16 &
\textbf{76.06} \scriptsize(-0.10) &
\textbf{75.37} \scriptsize(-0.79) &
\textbf{75.44} \scriptsize(-0.72) &
\textbf{76.09} \scriptsize(-0.07) &
{87.71} \scriptsize(+11.55) \\

ImageNet, $L_\infty$ $\downarrow$
& 99.70 &
\textbf{99.69} \scriptsize(-0.01) &
\textbf{99.62} \scriptsize(-0.08) &
\textbf{99.63} \scriptsize(-0.07) &
99.70 \scriptsize(+0.00) &
{99.92} \scriptsize(+0.22) \\

\midrule[1.5pt]
\multicolumn{2}{c|}{\textbf{$\qquad \quad$ Improvement Rate $\qquad \quad$}} & \textbf{5/6} & \textbf{5/6} & \textbf{4/6} & 3/6 & {0/6} \\
\midrule[1.5pt]
\end{tabular}
\end{adjustbox}

\caption{Comparison of various augmentation methods against ERM across multiple robustness scenarios, including common corruptions (\textit{tiny}ImageNet-C, ImageNet-C) and adversarial attacks ($L_2$, $L_{\infty}$). Essentially, the closer an augmentation aligns with the PSA condition, the higher its potential to ensure robustness when confronted with differing distribution shifts.}
\label{tab:main_table_imagenet}
\end{table*}

\section{Discussion and Future Work}
\label{sec:discussion}

In this section, we outline the strengths, limitations, and potential directions for future research stemming from our work. The primary contributions of our study are:

\setlist[itemize]{topsep=2pt, parsep=3pt, partopsep=1pt, itemsep=3pt, leftmargin=2em}
\begin{itemize}
    \item We \textit{first} offer a series of theories to explain how augmentation can improve robustness in \textit{flat minima} viewpoint, end-to-end fashion.
    \begin{itemize}[label=--,leftmargin=1em]
        \item While prior research has explored the relationship between augmentations and robustness, to the best of our knowledge, no existing work has utilized the concept of flat minima to establish this connection.
    \end{itemize}
    
    \item We \textit{first} establish a formal connection between the family of label-invariant augmentations and robustness under \textit{general} distribution shifts.
    \begin{itemize}[label=--,leftmargin=1em]
        \item Previous studies faced limitations such as focusing solely on: i) experimental analysis, ii) particular augmentation, or iii) specific distributional shifts.
        \item In contrast, our theories and analyses do not impose such restrictions on augmentations or benchmarks. We validate our findings through experiments across diverse robustness benchmarks.
    \end{itemize}
\end{itemize}

Despite these contributions, our work has certain limitations.
First, we do not provide the kind of extensive experimental validation across many benchmarks as done in technical reports. Second, our theorems do not directly cover label-manipulating augmentations such as Mixup~\cite{Mixup:2018} and CutMix~\cite{CutMix:2019}. Although our framework can be extended to incorporate label-mixing augmentation by setting $\delta$ toward the input mixture, this would require fixing a specific label-mixing rule and would restrict theory generality to the particular augmentation. Thus, we omit label-manipulating augmentations from our main analysis, leaving integration and further schemes to future work.

While not commonly done in augmentation literature, adversarial training (AT) can also be regarded as a specific type of augmentation and serve as a baseline~\cite{adv_as_baseline, adv_as_baseline2}. AT improves robustness to norm-bounded attacks but stays brittle under large distribution shifts (e.g., common corruptions). We hypothesize that such a phenomenon occurs since AT has limitations in promoting large $b$-flat minima due to diminutive perturbations. Therefore, combining AT with augmentations that generate samples in both short-distance and long-distance range (e.g., AugMix) helps counteract its vulnerability to large shifts, by effectively boosting $\gamma_A\gg0$ (Table~\ref{tab:fgsm_flatness},~\ref{tab:fgsm_robustness}).

\begin{table}[t]
\centering
\begin{adjustbox}{width=0.42\textwidth}
\begin{tabular}{c|c|cc}

\midrule[1.5pt]
\textbf{Dataset} & \textbf{Metrics} & $ \quad \ \ $AT$ \ \ \ $ & $ \ \ \ $AT + AugMix $ \ \ \ $ \\
\midrule[1.5pt]

\multirow{3}{*}{CIFAR-10} 
& $\mu_{\text{PAC-Bayes}}\downarrow$ & $ \ \ $ 158.15 & \textbf{113.78} \scriptsize (-44.37)\\

& LPF $\downarrow$ & $ \ \ $ 1.25 & \textbf{0.58} \scriptsize (-0.67) \\

& $\epsilon_{\text{sharp}}$ $\downarrow$ & $ \ \ $ 16.36 & \textbf{10.29} \scriptsize (-6.07) \\

\midrule

\multirow{3}{*}{CIFAR-100} &

$\mu_{\text{PAC-Bayes}}\downarrow$ & $ \ \ $ 207.90 & \textbf{161.89} \scriptsize (-46.01)  \\

& LPF $\downarrow$ & $ \ \ $ 2.95 & \textbf{2.11} \scriptsize (-0.84) \\

&$\epsilon_{\text{sharp}}\downarrow$& $ \ \ $ 20.81 & \textbf{10.34} \scriptsize (-10.47) \\

\midrule[1.5pt]
\end{tabular}
\end{adjustbox}
\caption{Flatness evaluations of AT (FGSM, $\epsilon=8/255$) and AT + AugMix on CIFAR-10/100.}
\label{tab:fgsm_flatness}
\end{table}

\begin{table}[t]
\centering
\begin{adjustbox}{width=0.40\textwidth}
\begin{tabular}{c|c|cc}

\midrule[1.5pt]
\textbf{Robustness}  &  \textbf{Benchmarks}  &  AT  &  AT + AugMix  \\
\midrule[1.5pt]

& CIFAR-10-C $\downarrow$
& 24.12 &
\textbf{15.39} \scriptsize (-8.73) \\

Common & CIFAR-10-$\overline{\text{C}}$ $\downarrow$
& 3.25 &
\textbf{1.30} \scriptsize (-1.95) \\

Corruption &  CIFAR-100-C $\downarrow$
& 52.50 &
\textbf{44.03} \scriptsize (-8.47) \\

 & CIFAR-100-$\overline{\text{C}}$ $\downarrow$
& 7.36 &
\textbf{3.74} \scriptsize (-3.62) \\

\midrule

& CIFAR-10, $L_2$ $\downarrow$
& 69.24 &
\textbf{62.86} \scriptsize (-6.38) \\

Adversarial & CIFAR-10, $L_\infty$ $\downarrow$
& 94.55 &
98.40 \scriptsize (+3.85) \\

Attacks & CIFAR-100, $L_2$ $\downarrow$
& 89.28 &
\textbf{88.60} \scriptsize (-0.68) \\

& CIFAR-100, $L_\infty$ $\downarrow$
& 98.42 &
99.75 \scriptsize (+1.33)\\
\midrule[1.5pt]
\end{tabular}
\end{adjustbox}
\caption{CIFAR-10/100 robustness benchmark results of AT and AT + AugMix.}
\label{tab:fgsm_robustness}
\end{table}

Finally, it is well known that models trained with flat-minima optimizers, such as Sharpness-Aware Minimization (SAM)~\cite{SAM:2021} and Stochastic Weight Averaging (SWA)~\cite{SWA:2018}, generalize better under distribution shifts. However, the mechanism by which data augmentation fosters flat minima remains underexplored. We bridge this gap by introducing a simple sufficient "proximal support" condition and proving that PSA encourages flat minima. See Appendix D for further discussions on applications, flat-minima optimizers, and adversarial training.

\section{Related Works}
\label{sec:related_work}

\subsection{Data Augmentations}
Existing augmentation techniques can be categorized into model-free, model-based, and policy-based approaches, following the classification proposed in \citet{augment_survey}.

\textit{Model-free augmentations} are further subdivided into single-image and multi-image techniques. Single-image methods encompass basic image operations like translation, rotation, and color jitter, as well as masking strategies such as CutOut \cite{CutOut} and Hide-and-Seek \cite{HideNSeek}. These transformations have been shown to enhance model performance on target data distributions efficiently and with low overhead. Multi-image augmentations involve composing multiple operations and blending elements from different images. Among label-manipulating multi-image methods are techniques that combine pairs of distinct images and their labels, including Mixup \cite{Mixup:2018} and CutMix \cite{CutMix:2019}. These approaches are widely recognized for their ability to boost model performance. Subsequently, label-preserving multi-image methods like AugMix \cite{AugMix9:2021} and PixMix \cite{PixMix:2022} have demonstrated superior effectiveness in enhancing model robustness.

\textit{Model-based augmentations} leverage pretrained models to produce augmented data. Several techniques employing generative models, such as CGAN \cite{CGAN} and its variants \cite{CGAN_var1, CGAN_var2, CGAN_var3, CGAN_var4}, are designed to mitigate data imbalance issues. Additional methods, including DeepAugment \cite{DeepAug10:2021}, ANT \cite{ANT11:2020}, and StyleAug \cite{StyleAug}, focus on improving classifier robustness to common corruptions, adversarial attacks, or domain shifts. More recently, pretrained diffusion models combined with effective prompt engineering \cite{Diffusion1, Diffusion2} has proven successful in enhancing performance across various vision tasks.

\textit{Policy-based augmentations} focus on designing an automatic way to determine the optimal augmentation strategies by employing reinforcement learning or adversarial training. From a pioneering method, AutoAugment \cite{AutoAugment} that utilizes reinforcement learning for finding the best augmentation strategies, 
subsequent works such as Fast AA \cite{FastAA}, Faster AA \cite{FasterAA}, and RandAugment \cite{RandAugment} aim to enhance both the efficiency of policy search and the model performance. 
Adversarial training-based augmentation strategies, including AdaTransform \cite{AdaTransform}, Adversarial AA \cite{AdvAA}, and AugMax \cite{AugMax} leverage adversarial perturbations which maximally disturbs samples to be misclassified into other labels, finally leading to improve model robustness against unseen domains.

Despite the effectiveness of augmentation methods in enhancing model performance, prior studies have focused more on their practical use rather than on understanding their theoretical impact on model robustness to data shifts.

\subsection{Augmentations and Model Robustness}

A number of previous works have tried to reveal the relationship between augmentation and model robustness. 

\citet{aug_analysis_cc1} has provided a game-theoretic perspective on augmentations and introduces a new proxy metric for measuring common corruption robustness.
\citet{ME-ADA} and \citet{ADA} have theoretically found out that adversarial perturbations in the latent space can simulate worst-case distributional shifts in the data. \citet{AugmentNAdvGood} have empirically found out that Mixup and CutOut with model weight averaging is shown to improve adversarial robustness. 
\citet{boundary_thickness} has shown that Mixup enlarges boundary thickness, the marginal space between differently labeled samples, which is deemed to be highly correlated with adversarial robustness and common corruptions.
\citet{AugFourier} interpret the augmentation-originated gains in model robustness by explaining that augmentations make deep models utilize both high and low frequency information of images so as to enhance model robustness against data corruptions.
\citet{NoLabelNAdvGood1} and \citet{NoLabelNAdvGood2} have theoretically shown that utilizing unlabeled data in training can improve adversarial robustness. 
\citet{DeepAug10:2021} have empirically found out that exploiting diverse augmentations together improves robustness against both adversarial and common corruptions.

Nonetheless, most of the prior interpretations of how augmentation contributes to model robustness are either confined to specific types of augmentations and robustness or empirical analysis. 
None of the prior works generally explain how augmentations can theoretically improve model robustness across diverse distributional shifts.

\subsection{Flat Minima and Robustness}

The relationship between flat minima in the loss landscape and improved generalization performance has been well-established in several seminal works~\cite{PAC_DG1,SWAD:2021,zhang2024duality,CL_flat,FL_flat}. Flat minima refer to regions in the optimization landscape where the loss function remains relatively stable and insensitive to small perturbations in the model parameters, which often correlate with better generalization to unseen data. For instance, \citet{PAC_DG1} establishes a rigorous theoretical link between these flat minima and superior generalization capabilities in over-parameterized neural networks, achieving this by refining the PAC-Bayes generalization bound originally proposed in~\citet{PAC_DG2}. This refinement provides a more precise quantification of how flatter regions can lead to models that perform more reliably on out-of-distribution data.

In a similar vein, \citet{SWAD:2021} demonstrates that flat minima contribute to tighter bounds on generalization error, supported by both theoretical analysis and practical experiments. Their approach highlights dense weight averaging as a practical and efficient heuristic for locating such minima during training, showing through empirical validation that it consistently enhances model performance across various benchmarks. Furthermore, \citet{zhang2024duality} investigates how sharpness-aware minimization (SAM) relates to adversarial training (AT), demonstrating that SAM can replicate AT’s behavior under the restrictive setting of Gaussian inputs and linear models. Because their theoretical analysis is confined to binary classification and Gaussian data, our theorems and the PSA condition have an edge in applying to general data distributions.

Despite these advances, prior work still leaves the open question of how \textit{data augmentation} itself steers optimization toward flat minima. Our contribution addresses this critical gap by forging a direct connection between data augmentation strategies and the identification of flat minima, while also demonstrating their pivotal role in bolstering model robustness against a wide array of general distribution shifts, such as those encountered in real-world scenarios involving domain variations or noisy inputs.

\subsection{Other Related Works}

\citet{jiang2023chasing} investigates how to preserve fairness under distribution shifts by reinterpreting such shifts as equivalent perturbations to both model weights and inputs, employing a transportation function. Still, their work focuses on group fairness with respect to demographic parity and equal opportunity, and demonstrates only the feasibility of this equivalence without further extensions.

\citet{flat_minima_cc} shows in binary classification that flatter minima mitigate the boundary tilting problem~\cite{boundary_tilting}, and empirically finds these minima correlate with greater robustness to common corruptions. However, the theory remains limited, as the boundary tilting problem is closely tied to adversarial robustness and is not explored beyond binary classification.

\section{Conclusion}\label{sec:conclusion}
Our framework connects data augmentation to robustness through flat minima viewpoint under data shifts. Augmentations meeting the PSA condition form broad, flat regions in parameter space, while others provide limited protection. Experiments on CIFAR and ImageNet substantiate our theorems, and we expect our work to inspire next-generation augmentation methods, including foundation model-based.

\section{Acknowledgments}
This work was supported by the National Research Foundation of Korea (NRF) grant funded by
the Korea government (MSIT) (No. RS-2024-00459023), Institute of Information \& Communications Technology Planning \& Evaluation (IITP) grant funded by the Korea government (MSIT) (No. RS-2020-II201336, Artificial Intelligence Graduate School Program at UNIST), (No. RS-2025-25442824, AI Star Fellowship Program at UNIST), (No. RS-2025-25441996, Development of a Virtual Tactile Signal Generation Platform Technology Based on Multimodal Vision-Tactile Integrated AGI), (No. IITP-2025-RS-2022-00156361, Innovative Human Resource Development for Local Intellectualization program).

\nocite{lazytraining0,lazytraining1,lazytraining2,lazytraining4,NTK0,NTK1,Tame1,Tame2,jacobianfullrank0,jacobianfullrank1, boundary_thickness}
\bibliography{aaai2026}


\newpage

\appendix

\onecolumn

\section*{A. Supplementary Materials}

This appendix contains additional material that could not be incorporated into the main paper due to page constraints, including detailed proofs of the theorems and experimental details. Section B provides detailed proofs of Theorems 1 to 3, Section C presents experimental details and additional results, and Section D offers further discussion on broader applicability of our findings including practical usage.

\setcounter{theorem}{1}
\setcounter{equation}{5}

\section*{B. Proofs on Theorems}
\label{proof_apdx}
\subsection{B.1. Proof on Theorem 1}
\label{proof_thm1}

\begin{figure*}[ht]
	\begin{center}
		\includegraphics[width=1.0\linewidth]{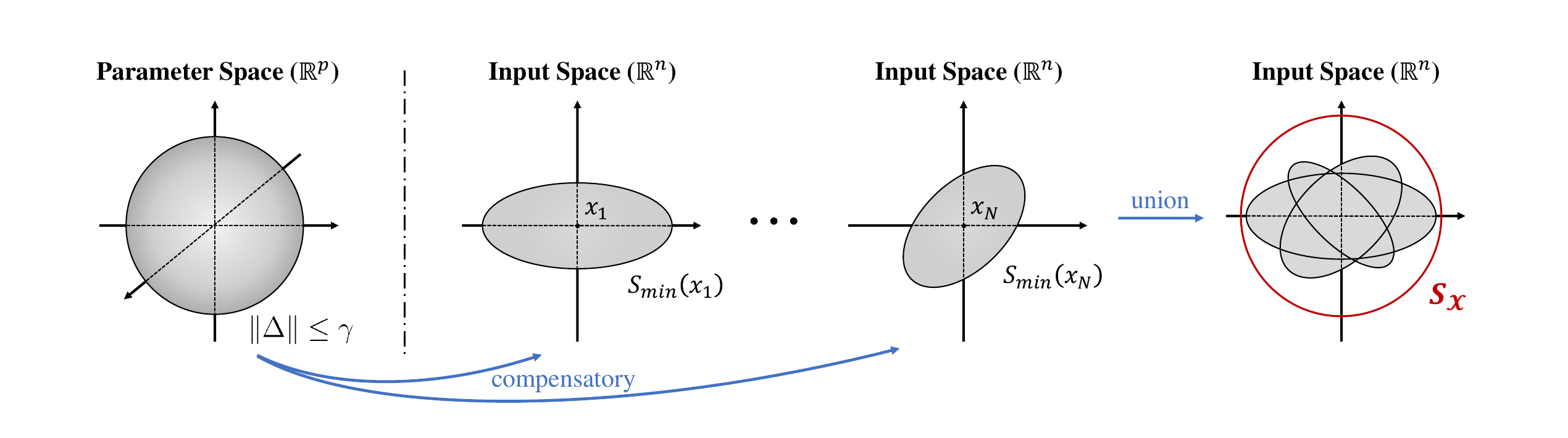}
	\end{center}
	\caption{An illustration of Theorem 1 (A). Given a parameter perturbation set $\{\Delta \in \mathbb{R}^p : \| \Delta \| \leq \gamma\}$ and datapoints $x_1, ..., x_n$, each $x_i$ has a minimal functional compensatory region $\mathcal{S}_{min}(x_i)$ shaped like an ellipsoid. All of the ellipsoids can be collectively enclosed by an isotropic ball $\mathcal{S}_\mathcal{X}$, whose radius is determined by the largest principal axis across the ellipsoids. The exact reverse process is done for the Theorem 1 (B).}
\label{fig:dummy}
\end{figure*}

\textbf{Theorem 1 (A).}
\textit{
(Parameter-to-Input) 
Consider a function $f: \mathbb{R}^n \times \mathbb{R}^p \rightarrow \mathbb{R}^c$. For any $\gamma>0$, define the ball of parameter perturbations $\{\Delta \in \mathbb{R}^p : \|\Delta\|\le \gamma\}$ and 
$$
\mathcal{S}_\mathcal{X} \;=\; \left\{\delta \in \mathbb{R}^n : \|\delta\|\;\le\;\max_{x \in \mathcal{D}}\!\Bigl(\dfrac{\sigma_{\max}^\theta}{\sigma_{\min}^{x}}\Bigr)\,\gamma \right\}.
$$
Then, $\mathcal{S}_\mathcal{X}$ is always a functional compensatory set for the parameter perturbation ball and dataset $\mathcal{D}$.
}

\begin{proof} As a brief recap regarding notations on Jacobian matrices, $\mathbf{J}$ is the Jacobian of a function $f: \mathbb{R}^n \times \mathbb{R}^ p \rightarrow \mathbb{R}^c$, and the Jacobian matrix $\mathbf{J}$ can be expressed as its input and parameter side differentials, i.e. $\mathbf{J} = \left[ \mathbf{J_x} \ \ \mathbf{J}_{\boldsymbol{\theta}} \right]$. The notation $\mathbf{J_x}(x_i)$ and $\mathbf{J}_{\boldsymbol{\theta}}(\theta)$ will be occasionally used to represent the evaluation of $\mathbf{J_x}$ and $\mathbf{J}_{\boldsymbol{\theta}}$ at point $x_i \in \mathbb{R}^n$ and $\theta \in \mathbb{R}^p$.

Using the first-order Taylor expansion, $f(x; \theta + \Delta) \approx f(x ; \theta) + \mathbf{J}_{\boldsymbol{\theta}} \cdot \Delta$. This approximation is supported by two main observations. First, when $f$ is a linear model, the Taylor expansion becomes exact, and a linear model with infinite width serves as a universal approximator. Second, supported by lazy training regime~\cite{lazytraining0, lazytraining1, lazytraining2, lazytraining4} and Neural Tangent Kernel (NTK)~\cite{NTK0, NTK1} statements, training loss for a deep model rapidly converges to zero with minimal parameter updates. The neural network is known to behave like its linearization around the initial weight in the entire training process, and the change in Jacobian becomes negligible. 

In the same vein, $f(x + \delta; \theta) \approx f(x; \theta) + \mathbf{J_x} \cdot \delta$. Given a dataset $\mathcal{D} = \{ x_i \}_{i=1}^N$ and $\gamma > 0$, our interest is finding a functional compensatory set $\mathcal{C}_\mathcal{X}$ such that for any $x \in \mathcal{D}$ and $\| \Delta \| \leq \gamma$, there exists $\delta \in \mathcal{C}_\mathcal{X}$ satisfying $f(x; \theta + \Delta) = f(x + \delta ; \theta)$. Utilizing the above two Taylor expansions, the problem reduces to finding the region of $\delta$ satisfying the equation
\begin{equation}\label{jacobian_equality}
\mathbf{J}_{\boldsymbol{\theta}} \cdot \Delta = \mathbf{J_x} \cdot \delta
\end{equation}

A neural network with standard deep learning activations (e.g. ReLU family, sigmoid/tanh family, softplus, GELU, etc.) is said to be in o-minimal structure (or tame) ~\cite{Tame1, Tame2}, which enables us to utilize Sard's theorem. 
By Sard’s theorem~\cite{Sard, MorseSard} stating that the set of critical values has Lebesgue measure zero, the Jacobian matrices have full rank almost everywhere. Therefore, we will assume the matrices $\mathbf{J}_\mathbf{x}$ and $\mathbf{J}_{\boldsymbol{\theta}}$ to be full-rank matrices as in~\cite{jacobianfullrank0, jacobianfullrank1}. Since $c \ll n, p$ and $\mathbf{J}_{\boldsymbol{\theta}}, \mathbf{J_x}$ are full-rank, there always exists a solution $\delta$ for the equation~\eqref{jacobian_equality}. Using psuedoinverse notation $\mathbf{J}_\mathbf{x}^+$ and kernel notation $\text{ker}(\mathbf{J_x})$ for $\mathbf{J_x}$, the solution for equation~\eqref{jacobian_equality} can be written as
$$\delta = \mathbf{J}_\mathbf{x}^+ \cdot \mathbf{J}_{\boldsymbol{\theta}} \cdot \Delta + w$$
, where $w \in \text{ker}(\mathbf{J_x})$. Under the constraint $\| \Delta \| \leq \gamma$, the full feasible solution region can be written as $\mathcal{S}_{\max}(x) := \{ \mathbf{J}_\mathbf{x}^+ \mathbf{J}_{\boldsymbol{\theta}} \, \Delta + w : \| \Delta \| \leq \gamma, w \in \text{ker}(\mathbf{J_x}) \}$, and the minimal covering solution region as $\mathcal{S}_{\min}(x) := \{ \mathbf{J}_\mathbf{x}^+ \mathbf{J}_{\boldsymbol{\theta}} \, \Delta : \| \Delta \| \leq \gamma \} $. By the definition of functional compensatory set, both $\mathcal{S}_{\max}(x)$ and $\mathcal{S}_{\min}(x)$ can be the functional compensatory sets for $x \in \mathcal{D}$ and $\| \Delta \| \leq \gamma$. For the sake of simplicity, we will investigate the feasible region $\mathcal{S}_{\min}(x)$.

There are two things to consider at this point. First, the set $\{ Ax : \| x \| \leq \gamma \}$ is always a rotated ellipsoid, whether the matrix $A$ is degenerate or not. Thus, $\mathcal{S}_{\min}(x) := \{ \mathbf{J}_\mathbf{x}^+ \mathbf{J}_{\boldsymbol{\theta}} \, \Delta : \| \Delta \| \leq \gamma \}$ is an $c$-dimensional rotated ellipsoid having singular values of $\mathbf{J}_\mathbf{x}^+ \mathbf{J}_{\boldsymbol{\theta}}$ as the lengths of its principal semi-axes. Second, any set $\mathcal{S}$ containing $\mathcal{S}_{\min}$ is also a compensatory set since $\mathcal{S}_{\min} \subseteq \mathcal{S}$ implies that for all $\delta \in \mathcal{S}_{\min}$, there exists $ \delta \in \mathcal{S}$. We will derive the closed-form region $\mathcal{S}$ using the fact that given an ellipsoid with maximum possible principal semi-axes length $l_{\max}$, the ellipsoid in encapsulated by an isotropic ball with radius $l_{\max}$ as well.

Let $\mathbf{J_x} = U_x \Sigma_x V_x^T$ and $\mathbf{J}_{\boldsymbol{\theta}} = U_\theta \Sigma_\theta V_\theta^T$ be the singular value decomposition representations, and $\mathbf{J}_\mathbf{x}^+ = V_x \Sigma_x^+ U_x^T$ be the psuedoinverse representation of $\mathbf{J_x}$. Let $\sigma_1^x, ..., \sigma_c^x$ and $\sigma_1^\theta, ..., \sigma_c^\theta$ be the singular values of $\mathbf{J_x}$ and $\mathbf{J}_{\boldsymbol{\theta}}$ (i.e. diagonal elements of $\Sigma_x$ and $\Sigma_\theta$.) Trivially, $\mathbf{J_x}^+$ has singular values $1/\sigma_1^x, ..., 1/\sigma_c^x$.

\begin{align*}
\| \mathbf{J}_\mathbf{x}^+ \mathbf{J}_{\boldsymbol{\theta}} \, \Delta \|
&\leq \| \mathbf{J}_\mathbf{x}^+ \| \cdot \| \mathbf{J}_{\boldsymbol{\theta}} \| \cdot \| \Delta\| && (\because \| A B \| \leq \| A \| \cdot \| B \|) \\
&=\|V_x \Sigma_x^+ U_x^T \| \cdot \|U_\theta \Sigma_\theta V_\theta^T \| \cdot \gamma \\
&\leq \| \Sigma_x^+ \| \| \Sigma_\theta \| \cdot \gamma && (\because \| Q \| = 1 \text{ for orthonormal matrix }Q) \\
&= \dfrac{\max_{i=1, ..., c} \ \sigma_i^\theta}{\min_{i=1, ..., c} \ \sigma_i^x} \cdot \gamma =: \dfrac{\sigma_{\max}^\theta}{\sigma_{\min}^x} \cdot \gamma
\end{align*}

Since the above inequality is about arbitrary datapoint $x \in \mathcal{D}$, we can write inequalities for the whole dataset $\mathcal{D}=\{ x_i \}_{i=1}^N$ as
$$\| \mathbf{J}_\mathbf{x}(x_1)^+ \mathbf{J}_{\boldsymbol{\theta}}(\theta) \, \Delta \| \leq \sigma_{\max}^\theta / \sigma_{\min}^{x_1} \cdot \gamma, \cdots, \| \mathbf{J}_\mathbf{x}(x_N)^+ \mathbf{J}_{\boldsymbol{\theta}}(\theta) \, \Delta \| \leq \sigma_{\max}^\theta / \sigma_{\min}^{x_N} \cdot \gamma.$$

Then, the following bound holds:

$$\text{For all } x \in \mathcal{D} \text{ and } \| \Delta \| \leq \gamma, \quad \| \mathbf{J}_\mathbf{x}^+(x) \mathbf{J}_{\boldsymbol{\theta}}(\theta) \, \Delta \| \leq \max_{x \in \mathcal{D}} \dfrac{\sigma_{\max}^\theta}{\sigma_{\min}^x} \cdot \gamma.$$

Therefore, $\mathcal{S}_\mathcal{X} \;=\; \left\{ \,\delta \in \mathbb{R}^n : \|\delta\| \;\leq\; 
\max_{x \in \mathcal{D}} \left(\dfrac{\sigma_{\max}^\theta}{\sigma_{\min}^x}\right) \cdot \gamma 
\right\}$ is a functional compensatory set for parameter perturbations $\{\Delta \in \mathbb{R}^p : \|\Delta\| \leq \gamma\}$.

\end{proof}

\textbf{Theorem 1 (B).}
\textit{
(Input-to-Parameter) Consider a function $f: \mathbb{R}^n \times \mathbb{R}^p \rightarrow \mathbb{R}^c$. Conversely, for any $\gamma>0$, define the ball of input perturbations
$\{\delta \in \mathbb{R}^n : \|\delta\|\le \gamma\}$ and 
$$\mathcal{S}_\Theta \;=\; \left\{\Delta \in \mathbb{R}^p : \|\Delta\|\;\le\;\max_{x \in \mathcal{D}}\!\Bigl(\dfrac{\sigma_{\max}^{x}}{\sigma_{\min}^\theta}\Bigr)\,\gamma \right\}.$$ Then, $\mathcal{S}_\Theta$ is always a functional compensatory set for the input perturbation ball and dataset $\mathcal{D}$.
}

\begin{proof}
We employ the same strategy to that in the proof of Theorem 1.A. For conciseness, we do not restate the complete set of assumptions and justifications presented previously. Under the first-order Talyor expansion, $f(x; \theta + \Delta) \approx f(x ; \theta) + \mathbf{J}_{\boldsymbol{\theta}} \cdot \Delta$ and $f(x + \delta; \theta) \approx f(x; \theta) + \mathbf{J_x} \cdot \delta$, resulting in $\mathbf{J}_{\boldsymbol{\theta}} \cdot \Delta = \mathbf{J_x} \cdot \delta$. Since $\mathbf{J}_{\boldsymbol{\theta}}$ and $\mathbf{J_x}$ are full-rank, the solution for the equation becomes $$\Delta = \mathbf{J}_{\boldsymbol{\theta}}^+ \cdot \mathbf{J_x} \cdot \delta + w$$, where $w \in \text{ker}(\mathbf{J}_{\boldsymbol{\theta}})$. Under the constraint $\| \delta \| \leq \gamma$, the minimal feasible solution region becomes $\mathcal{S}_{\min}(x) := \{ \mathbf{J}_{\boldsymbol{\theta}}^+ \mathbf{J_x} \, \delta: \| \delta \| \leq \gamma \}$. Under the singular value decomposition, $\mathbf{J}_{\boldsymbol{\theta}} = U_\theta \Sigma_\theta V_\theta^T$ and $\mathbf{J_x} = U_x \Sigma_x V_x^T$. Let $\sigma_1^\theta, ..., \sigma_c^\theta$ and $\sigma_1^x, ..., \sigma_c^x$ represent the singular values of $\mathbf{J}_{\boldsymbol{\theta}}$ and $\mathbf{J_x}$, respectively. Then,

\begin{align*}
\| \mathbf{J}_{\boldsymbol{\theta}}^+ \mathbf{J_x} \, \delta \|
&\leq \| \mathbf{J}_{\boldsymbol{\theta}}^+ \| \cdot \| \mathbf{J_x} \| \cdot \| \delta\| && (\because \| A B \| \leq \| A \| \cdot \| B \|) \\
&=\|V_\theta \Sigma_\theta^+ U_\theta^T \| \cdot \|U_x \Sigma_x V_x^T \| \cdot \gamma \\
&\leq \| \Sigma_\theta^+ \| \| \Sigma_x \| \cdot \gamma && (\because \| Q \| = 1 \text{ for orthonormal matrix }Q) \\
&= \dfrac{\max_{i=1, ..., c} \ \sigma_i^x}{\min_{i=1, ..., c} \ \sigma_i^\theta} \cdot \gamma =: \dfrac{\sigma_{\max}^x}{\sigma_{\min}^\theta} \cdot \gamma
\end{align*}

Since this holds for any $x \in \mathcal{D}$, we can write for $\mathcal{D} = \{ x_i \}_{i=1}^N$:
$$\| \mathbf{J}_\mathbf{\theta}(\theta)^+ \mathbf{J_x}(x_1) \, \delta \| \leq \sigma_{\max}^{x_1} / \sigma_{\min}^\theta \cdot \gamma, \cdots, \| \mathbf{J}_\mathbf{\theta}(\theta)^+ \mathbf{J}_\mathbf{x}(x_N)  \, \delta \| \leq \sigma_{\max}^{x_N} / \sigma_{\min}^{\theta} \cdot \gamma.$$

Leading to:

$$\text{For all } x \in \mathcal{D} \text{ and } \| \delta \| \leq \gamma, \quad \| \mathbf{J}_\mathbf{x}^+(x) \mathbf{J}_{\boldsymbol{\theta}}(\theta) \, \delta \| \leq \max_{x \in \mathcal{D}} \dfrac{\sigma_{\max}^x}{\sigma_{\min}^\theta} \cdot \gamma.$$

Consequently, $\mathcal{S}_\Theta \;=\; \left\{ \,\Delta \in \mathbb{R}^p : \|\Delta\| \;\leq\; 
\max_{x \in \mathcal{D}} \left(\dfrac{\sigma_{\max}^x}{\sigma_{\min}^\theta}\right) \cdot \gamma 
\right\}$ is a functional compensatory set for input perturbations $\{\delta \in \mathbb{R}^n : \|\delta\| \leq \gamma\}$.

\end{proof}

\subsection*{B.2. Proof on Theorem 2}
\label{proof_thm2}

\setcounter{equation}{8}

\begin{theorem}
(Flatness of $\tilde{\theta}^{*}$) Let $\theta^*\in\Theta^*$ and $\tilde{\theta}^{*}\in\tilde{\Theta}^*$ be $b^*$ and $\tilde{b}^*$-flat minima, respectively. The following inequality holds: 
\begin{equation}
\min_{\theta^*\in\Theta^*}b^{*}\leq \min_{\tilde{\theta}^*\in\tilde{\Theta}^*}\tilde{b}^{*}.
\end{equation}
\end{theorem}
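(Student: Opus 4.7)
The plan is to combine two observations: (i) the PSA condition forces the augmented solution set to be nested inside the original one, $\tilde{\Theta}^* \subseteq \Theta^*$, and (ii) every augmented-data minimizer $\tilde{\theta}^*$ inherits a positive parameter-space flatness radius via Theorem~\ref{thm1}(A). The desired inequality then follows from a standard subset-minimization step.

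First, I would check the inclusion $\tilde{\Theta}^* \subseteq \Theta^*$. Since PSA requires $\mathcal{P}_\mathcal{A}(\tilde{x}\mid x) > 0$ for all $\|\delta\| \le \gamma_\mathcal{A}$, the point $\delta = 0$ has positive density, so every original $(x,y) \in \mathcal{D}$ lies in $\mathrm{supp}(\tilde{\mathcal{D}})$. Nonnegativity of $\mathcal{L}$ together with $\mathcal{E}_{\tilde{\mathcal{D}}}(\tilde{\theta}^*) = 0$ then forces $\mathcal{L}(f(x;\tilde{\theta}^*), y) = 0$ for every $(x,y) \in \mathcal{D}$, giving $\tilde{\theta}^* \in \Theta^*$. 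The same reasoning extends to the full $\gamma_\mathcal{A}$-ball: for every $\|\delta\| \le \gamma_\mathcal{A}$ and every $x \in \mathcal{D}$, we have $\mathcal{L}(f(x+\delta;\tilde{\theta}^*), y) = 0$, so $\tilde{\theta}^*$ already enjoys a zero-loss neighborhood of radius $\gamma_\mathcal{A}$ in the \emph{input} space.

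Next, I would transfer this input-space coverage into parameter-space flatness by applying Theorem~\ref{thm1}(A) with $\gamma = \gamma_\Theta := \bigl(\max_{x\in\mathcal{D}}\sigma_{\max}^\theta/\sigma_{\min}^{x}\bigr)^{-1}\gamma_\mathcal{A}$. The theorem produces a functional compensatory set $\mathcal{S}_\mathcal{X} \subseteq \{\|\delta\| \le \gamma_\mathcal{A}\}$ such that for every $x \in \mathcal{D}$ and every $\|\Delta\| \le \gamma_\Theta$ there is some $\delta \in \mathcal{S}_\mathcal{X}$ with $f(x;\tilde{\theta}^* + \Delta) = f(x+\delta; \tilde{\theta}^*)$. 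Combining this with the input-space zero-loss guarantee from the previous step yields $\mathcal{L}(f(x;\tilde{\theta}^* + \Delta), y) = 0$ for every $x$ and every $\|\Delta\| \le \gamma_\Theta$, i.e.\ $\hat{\mathcal{E}}_\mathcal{D}(\tilde{\theta}^* + \Delta) = 0$ throughout the $\gamma_\Theta$-ball. Thus $\tilde{\theta}^*$ is at least $\gamma_\Theta$-flat on $\mathcal{D}$, so its $b$-flatness radius satisfies $\tilde{b}^* \ge \gamma_\Theta > 0$.

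Treating $b^*$ and $\tilde{b}^*$ as flatness radii of the same empirical risk $\hat{\mathcal{E}}_\mathcal{D}$—which is legitimate because $\tilde{\Theta}^* \subseteq \Theta^*$ makes $b$-flatness well-defined on both sets—the conclusion
$$\min_{\theta^* \in \Theta^*} b^{*} \;\le\; \min_{\tilde{\theta}^* \in \tilde{\Theta}^*} \tilde{b}^{*}$$
reduces to the standard fact that minimization over a subset is at least as large as minimization over the superset. The main obstacle I anticipate is this second step: Theorem~\ref{thm1}(A) is phrased so that parameter perturbations are \emph{absorbed} by input perturbations, but here I must run that correspondence in reverse, deducing that input-space zero loss on $\mathcal{S}_\mathcal{X}$ forces parameter-space zero loss on the $\gamma_\Theta$-ball. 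Carefully tracking the quantifier structure and verifying that the explicit construction $\delta = \mathbf{J}_\mathbf{x}^+ \mathbf{J}_{\boldsymbol{\theta}}\Delta$ used inside the proof of Theorem~\ref{thm1}(A) indeed places $\delta$ within the PSA-covered zero-loss region is the delicate point; a secondary subtlety is handling the degenerate case $\min b^{*} = 0$, which the inequality still accommodates since the right-hand side is bounded below by $\gamma_\Theta > 0$.
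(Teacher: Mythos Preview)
Your proposal is correct and shares with the paper the crucial second step: using Theorem~\ref{thm1}(A) with radius $\gamma_\Theta = \bigl(\max_{x\in\mathcal{D}}\sigma_{\max}^\theta/\sigma_{\min}^{x}\bigr)^{-1}\gamma_\mathcal{A}$ to translate the PSA-guaranteed input-space zero-loss ball into a parameter-space zero-loss ball, thereby obtaining $\tilde{b}^* \ge \gamma_\Theta > 0$. Your handling of the quantifier direction here is exactly right and matches the paper's argument.

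The genuine difference is how the left-hand side is treated. The paper does \emph{not} use the subset inclusion $\tilde{\Theta}^* \subseteq \Theta^*$; instead it explicitly proves $\min_{\theta^*\in\Theta^*} b^* = 0$ by invoking universal approximation to construct a parameter $\theta_0^*$ that interpolates $\mathcal{D}$ exactly but has strictly positive loss off $\mathcal{D}$, then shows by contradiction (via Theorem~\ref{thm1}(B)) that such a $\theta_0^*$ must have $b_0^* = 0$. The paper thus establishes the sharper chain $\min b^* = 0 < \gamma_\Theta \le \min \tilde{b}^*$. Your route is more economical: once $\tilde{\Theta}^* \subseteq \Theta^*$ and both flatness radii are taken with respect to the same risk $\hat{\mathcal{E}}_\mathcal{D}$, the inequality is immediate from monotonicity of the minimum under set restriction, and you avoid the universal-approximator assumption entirely. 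The trade-off is that you obtain only the stated inequality, not the quantitative identification of the left-hand side as zero; the paper's version, while requiring an extra hypothesis, makes explicit that the gap between the two minima is at least $\gamma_\Theta$ and hence grows with the PSA radius $\gamma_\mathcal{A}$.
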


\begin{proof}

We consider $f(\cdot; \theta)$ to be a universal approximator for functions from the input space $\mathbb{R}^n$ to the output space $\mathbb{R}^c$.
For the sake of simplicity, we will omit the subscript notations $\theta^*\in\Theta^*$ and $\tilde{\theta}^*\in\tilde{\Theta}^*$ in $\min_{\theta^*\in\Theta^*}$ and $\min_{\tilde{\theta}^*\in\tilde{\Theta}^*}$ henceforth. We prove $\min b^{*} = 0 < 
\left(\max_{x \in \mathcal{D}} \sigma_{\max}^\theta / \sigma_{\min}^x \right)^{-1} \cdot \gamma_\mathcal{A} \leq \min \tilde{b}^{*}.$

Let $\theta^*_0$ be an optimal parameter of a model that satisfies for all $(x, y) \in \mathcal{D},$ $ \mathcal{L}(f(x;\theta_0^*), y)$ $ = 0$ and for all $ (x,y) \in \mathbb{R}^n \times \mathbb{R}^c \  /  \ \mathcal{D},$ $\mathcal{L}(f(x;\theta_0^*), y) > 0$. Let $b_0^*$ denote the $b$-value of the $b$-flat minima for $\theta^*_0$.

\textit{Suppose $b_0^* > 0$}. By the definition of $b$-flat minima, for all $ \| \Delta \| \leq b_0^*$ and $(x, y) \in \mathcal{D}$, $\mathcal{L}(f(x;\tilde{\theta}^*_0 + \Delta), y) = \mathcal{L}(f(x;\tilde{\theta}^*_0), y))$.
Define $\mathcal{C}_\Theta := \{ \Delta  \in \mathbb{R}^p : \| \Delta \| \leq b_0^* \}$. By Theorem~\ref{thm1}, $\mathcal{C}_\Theta$ is a functional compensatory set for input perturbations

$$\left\{ \delta \in \mathbb{R}^n : \left(\max_{x \in \mathcal{D}} \dfrac{\sigma_{\max}^x}{\sigma_{\min}^\theta}\right)^{-1} \cdot b_0^* \right\} =: \left\{ \delta \in \mathbb{R}^n : \lambda \cdot b_0^* \right\}$$

, where $\lambda > 0$. Then, for all $\| \delta \| \leq \lambda \cdot b_0^*, \ \mathcal{L}(f(x + \delta;\theta_0^*), y) = \mathcal{L}(f(x;\theta_0^*), y)) = 0$ by the definition of functional compensatory set. Nonetheless, $\mathcal{L}(f(x+\delta; \theta_0^*), y) > 0$ by the definition of $\theta_0^*$. $\Rightarrow\Leftarrow$

This implies $\min b^* \leq \min b_0^* \leq 0$, i.e. $\min b^* = 0.$

We now show $\left(\max_{x \in \mathcal{D}} \sigma_{\max}^\theta / \sigma_{\min}^x \right)^{-1} \cdot \gamma_\mathcal{A} \leq \min \tilde{b}^*$ as follows:

Let $\tilde{\mathcal{D}} = \{(\mathcal{A}(x), y) : (x, y) \in \mathcal{D} \}$ represent an augmented dataset. Let $\tilde{\theta}^*$ be some optimal parameter which achieves zero loss on the augmented dataset, i.e. for all $ (\tilde{x},y) \in \tilde{\mathcal{D}},$ $\mathcal{L}(f(\tilde{x}; \tilde{\theta}^*), y)) = 0$. Finally, let $\tilde{b}^*$ be the $b$ value for the optimal parameter of $\tilde{\mathcal{D}}$. We show 
$$\left(\max_{x \in \mathcal{D}} \sigma_{\max}^\theta \big/ \sigma_{\min}^x \right)^{-1} \cdot \gamma_\mathcal{A} \leq \min \tilde{b}^*$$
as follows. By definition, $\gamma_\mathcal{A}$ represents the maximum $L_2$ distance that augmentation has nonzero probabilities around the original input. Define $\mathcal{C}_\mathcal{X} = \{ \delta \in \mathbb{R}^n : \| \delta \| \leq \gamma_\mathcal{A} \}.$ Under Theorem 1, $\mathcal{C}_\mathcal{X}$ is the compensatory set for the parameter perturbations

$$\left\{ \Delta \in \mathbb{R}^p : \max_{x \in \mathcal{D}} \left(\dfrac{\sigma_{\max}^x}{\sigma_{\min}^\theta}\right)^{-1} \gamma_\mathcal{A} \right\}.$$

By definition of $b$-flat minima, we have $\left(\max_{x \in \mathcal{D}} \sigma_{\max}^\theta / \sigma_{\min}^x \right)^{-1} \cdot \gamma_\mathcal{A} \leq \min \tilde{b}^*$.
\end{proof}

\newpage

\subsection*{B.3. Proof on Theorem 3}
\label{proof_thm3}

\begin{theorem}
(Generalization bound) Given $M$ covering sets $\{\Theta_k\}_{k=1}^M$ of parameter space $\Theta$ with $\Theta = \bigcup_{k=1}^M \Theta_k$ and $\text{diam}(\Theta) = \sup_{\theta,\theta' \in \Theta} ||\theta - \theta'||_2$, where $M = \left\lceil \frac{\text{diam}(\Theta)}{\gamma_\Theta} \right\rceil^p$, and VC dimension $v_k$ for each $\Theta_k$, the following inequality holds with probability at least $1-\delta$:
\begin{align}
\mathcal{E}_{\mathcal{T}}(\tilde{\theta}^*) &< \hat{\mathcal{E}}_{\mathcal{\tilde{D}}}(\tilde{\theta}^*) + \frac{1}{2} \textbf{Div}(\mathcal{D}, \mathcal{T}) + \max_{k\in [1,M]} \left[ \sqrt{\frac{v_k \log(N/v_k)}{N} + \frac{\log(M/\delta)}{N}} \right], \notag
\end{align}
where $\textbf{Div}(\mathcal{D}, \mathcal{T}) = 2 \sup_A |\mathbb{P}_\mathcal{D}(A) - \mathbb{P}_\mathcal{T}(A)|$ measures the maximal discrepancy between the source and the target distributions $\mathcal{D}$ and $\mathcal{T}$, and $N$ is the number of samples drawn from $\mathcal{\tilde{D}}$.
\end{theorem}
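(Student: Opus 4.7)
The plan is to chain a standard domain-adaptation inequality with a covering-refined VC-type concentration bound, using the flatness of $\tilde{\theta}^{*}$ established in Theorem~\ref{thm2} to sharpen the complexity term. First, by the Ben-David-style triangle inequality for bounded losses, one obtains $\mathcal{E}_{\mathcal{T}}(\tilde{\theta}^{*}) \leq \mathcal{E}_{\mathcal{D}}(\tilde{\theta}^{*}) + \tfrac{1}{2}\textbf{Div}(\mathcal{D},\mathcal{T})$, which follows directly from expanding the total-variation term and peeling off the common mass of $\mathcal{D}$ and $\mathcal{T}$. This takes care of the divergence term and reduces the problem to bounding $\mathcal{E}_{\mathcal{D}}(\tilde{\theta}^{*}) - \hat{\mathcal{E}}_{\tilde{\mathcal{D}}}(\tilde{\theta}^{*})$.

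Next, to exploit flatness, I would cover $\Theta$ by $M = \lceil \text{diam}(\Theta)/\gamma_{\Theta}\rceil^{p}$ balls $\Theta_{k}$ of radius $\gamma_{\Theta}$, a standard volumetric covering of a $p$-dimensional set of diameter $\text{diam}(\Theta)$. Theorem~\ref{thm2} guarantees that $\tilde{\theta}^{*}$ is at least $\gamma_{\Theta}$-flat, so it sits inside one such ball $\Theta_{k^{\star}}$ throughout which the empirical risk stays essentially at its minimum. Consequently, the effective hypothesis class at $\tilde{\theta}^{*}$ collapses from $\Theta$ to a single $\Theta_{k^{\star}}$ with VC dimension $v_{k^{\star}} \leq v$, which is the mechanism through which flatness tightens the bound.

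I would then invoke the classical Vapnik uniform-convergence bound on each $\Theta_{k}$ at confidence level $\delta/M$, yielding, for every $\theta \in \Theta_{k}$,
\[
|\mathcal{E}_{\mathcal{D}}(\theta) - \hat{\mathcal{E}}_{\tilde{\mathcal{D}}}(\theta)| \;\leq\; \sqrt{\tfrac{v_{k}\log(N/v_{k})}{N} + \tfrac{\log(M/\delta)}{N}}.
\]
A union bound over the $M$ covers turns this into a uniform statement of confidence $1-\delta$ on $\Theta$, and taking the maximum over $k$ produces the third term in the claimed inequality. Combining with the divergence step from the first paragraph yields the stated bound on $\mathcal{E}_{\mathcal{T}}(\tilde{\theta}^{*})$.

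The main obstacle will be rigorously justifying that the VC bound can be written with $\hat{\mathcal{E}}_{\tilde{\mathcal{D}}}$ on the empirical side while keeping $\mathcal{E}_{\mathcal{D}}$ on the population side, since a naive VC argument controls $|\mathcal{E}_{\tilde{\mathcal{D}}} - \hat{\mathcal{E}}_{\tilde{\mathcal{D}}}|$ instead. The natural route is to appeal to label preservation of $\mathcal{A}$ together with the PSA condition: since $\tilde{\mathcal{D}}$ contains a neighborhood of every $x \in \mathcal{D}$ with the same label, Theorem~\ref{thm1} translates flatness on $\tilde{\mathcal{D}}$-samples inside a ball $\Theta_{k}$ into equally small loss on the original $\mathcal{D}$-samples, allowing the substitution. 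A secondary care point is verifying that $M$ truly equals $\lceil \text{diam}(\Theta)/\gamma_{\Theta}\rceil^{p}$ rather than a looser constant multiple; this boils down to the standard grid-based covering of an axis-aligned enclosing hypercube of side $\text{diam}(\Theta)$ by $\ell_{2}$-balls of radius $\gamma_{\Theta}$.
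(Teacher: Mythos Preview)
Your first step—bounding $\mathcal{E}_{\mathcal{T}}(\tilde{\theta}^*)$ by $\mathcal{E}_{\mathcal{D}}(\tilde{\theta}^*) + \tfrac{1}{2}\textbf{Div}(\mathcal{D},\mathcal{T})$—matches the paper's Proposition~1 exactly. The gap is in the concentration step. A per-cell VC bound computed from samples of $\tilde{\mathcal{D}}$ controls only $|\mathcal{E}_{\tilde{\mathcal{D}}}-\hat{\mathcal{E}}_{\tilde{\mathcal{D}}}|$, as you yourself note; but your proposed patch (Theorem~\ref{thm1} plus PSA showing that low loss on $\tilde{\mathcal{D}}$-samples forces low loss on $\mathcal{D}$-samples) relates only the two \emph{empirical} risks and does nothing for the population gap $\mathcal{E}_{\mathcal{D}}-\mathcal{E}_{\tilde{\mathcal{D}}}$. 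Your appeal to Theorem~\ref{thm2} is also inert here: any parameter lies in some $\Theta_k$ regardless of flatness, so ``$\tilde{\theta}^*$ sits in a single cell'' contributes nothing to a plain union-bound-over-cells VC argument. The paper in fact never invokes Theorem~\ref{thm2} in this proof.

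The paper avoids mixing the distribution change with the concentration step by inserting an intermediate \emph{robust} empirical risk on the original data,
\[
\hat{\mathcal{E}}_{\mathcal{D}}^{\gamma}(\theta)\;=\;\max_{\|\Delta\|\le\gamma}\;\tfrac{1}{N}\sum_{i}\mathcal{L}\big(f(x_i;\theta+\Delta),y_i\big),
\]
and quoting the SWAD robust-risk bound (their Proposition~2): $\mathcal{E}_{\mathcal{D}}(\theta)\le \hat{\mathcal{E}}_{\mathcal{D}}^{\gamma_\Theta}(\theta)+\mathcal{B}(M,N,\delta)$. Here the cover radius $\gamma_\Theta$ enters the complexity term through the robust-risk mechanism itself, not through flatness of $\tilde{\theta}^*$, and crucially both sides live on $\mathcal{D}$. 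Only afterward does the paper pass to $\tilde{\mathcal{D}}$, and purely at the sample level: Theorem~\ref{thm1} converts the parameter-perturbed $\hat{\mathcal{E}}_{\mathcal{D}}^{\gamma_\Theta}$ to the input-perturbed $\hat{\mathscr{E}}_{\mathcal{D}}^{\gamma_{\mathcal{A}}}$, and PSA together with optimality of $\tilde{\theta}^*$ on $\tilde{\mathcal{D}}$ gives $\hat{\mathscr{E}}_{\mathcal{D}}^{\gamma_{\mathcal{A}}}(\tilde{\theta}^*)\le\hat{\mathcal{E}}_{\tilde{\mathcal{D}}}(\tilde{\theta}^*)$. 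The missing ingredient in your plan is exactly this robust-empirical-risk bridge on $\mathcal{D}$, which decouples ``population $\to$ empirical'' from ``$\mathcal{D}\to\tilde{\mathcal{D}}$''.
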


\begin{proof}
Remind that $\mathcal{\tilde{D}}$ represents augmented data distribution, and $\gamma_{\Theta}:=\left(\max_{x \in \mathcal{D}}\sigma_{\max}^\theta / \sigma_{\min}^x \right)^{-1} \cdot \gamma_\mathcal{A}$ the radius of the compensatory set $\mathcal{C}_\Theta$ defined in Theorem~\ref{thm1}. The robust empirical risk is originally defined~\cite{SWAD:2021} as:
$$\hat{\mathcal{E}}_{\mathcal{D}}^\gamma(\theta) := \max_{\|\delta\| \leq \gamma} \frac{1}{N}\sum_{i=1}^N \big[L(f(x_i, \theta + \delta), y_i)\big].$$
Analogously, we define the counterpart in the \textit{input} space as follows:
$$\hat{\mathscr{E}}_{\mathcal{D}}^\gamma(\theta) := \max_{\|\delta\| \leq \gamma} \frac{1}{N}\sum_{i=1}^N \big[L(f(x_i + \delta, \theta), y_i)\big].$$

\noindent For the sake of readability, the notation $\mathcal{B}(M, N, \delta):=\max_{k\in [1,M]} \left[ \sqrt{\frac{v_k \log(N/v_k)}{N} + \frac{\log(M/\delta)}{N}} \right]$ will be used henceforth.

\noindent We now list the necessary propositions from~\cite{SWAD:2021} for the rest of the proof.

\noindent \textbf{Proposition 1.} For all $ \theta \in \Theta, \ |\mathcal{E}_\mathcal{D}(\theta) - \mathcal{E}_\mathcal{T}(\theta)| \leq \dfrac{1}{2} \textbf{Div} (\mathcal{D}, \mathcal{T}).$\\

\noindent \textbf{Proposition 2.} For all $ \theta \in \Theta, \gamma > 0, \ \mathcal{E}_\mathcal{D}(\theta) \leq \hat{\mathcal{E}}_\mathcal{D}^{\gamma}(\theta) + \mathcal{B}(M, N, \theta)$.\\

\noindent We are now ready to prove the rest of our theorem.
\begin{align*}
\mathcal{E}_\mathcal{T}(\tilde{\theta}^*)
&\leq \mathcal{E}_\mathcal{D}(\tilde{\theta}^*) + \dfrac{1}{2}\textbf{Div}(\mathcal{D}, \mathcal{T}) && \text{Prep. 1} \\
&\leq \mathcal{E}_\mathcal{D}^{\gamma_\Theta}(\tilde{\theta}^*) + \dfrac{1}{2}\textbf{Div}(\mathcal{D}, \mathcal{T}) + \mathcal{B}(M, N, \delta) && \text{Prep. 2, $\gamma = \gamma_\Theta$} \\
&\leq \hat{\mathscr{E}}_\mathcal{D}^{\gamma_\mathcal{A}}(\tilde{\theta}^*) + \dfrac{1}{2}\textbf{Div}(\mathcal{D}, \mathcal{T}) + \mathcal{B}(M, N, \delta) && \text{Def. of $\mathscr{E}, \gamma_\Theta,$ Thm. 1} \\
&\leq \hat{\mathcal{E}}_{\tilde{\mathcal{D}}} (\tilde{\theta}^*) + \dfrac{1}{2}\textbf{Div}(\mathcal{D}, \mathcal{T}) + \mathcal{B}(M, N, \delta) && \text{Def. of $\gamma_\mathcal{A}, \, \tilde{\theta}^*$} \\
\end{align*}
\end{proof}

\newpage

\section*{C. Experimental Details and Additional Experiments} \label{exp_apdx}
In this section, we provide a comprehensive account of the experiments corresponding to the tables and figures presented in the main paper, along with supplementary results and discussions that were excluded from the primary manuscript due to page constraints. The additional results include extra robustness evaluations on different backbone networks, clean accuracies of models in Table~\ref{tab:main_table_cifar} and~\ref{tab:main_table_imagenet}, and singular value bounds on Theorem~\ref{thm1}.

\subsection{C.1. Experimental Methodology} \label{exp_detail_apdx}

For all the experiments assessing flatness, mean accuracy, mean corruption error (mCE), and adversarial robustness, we trained three independent models per benchmark and reported the averaged results, with the exception of the ImageNet experiment in Table~\ref{tab:flatness_metric_imagenet} and~\ref{tab:main_table_imagenet}, which was conducted only \textit{once} due to high computational demands.

\subsection{C.2. Flatness Metrics}\label{flatness_explanation_apdx}
\textbf{Descriptions on Metrics.}
$\mu_{\text{PAC-Bayes}}$ represents simplified PAC-Bayesian bound introduced by \citet{PACBayes}. The simplified PAC-Bayesian bound, denoted as $\mu_{\text{PAC-Bayes}}$, is computed as $1/\sigma$, where $\sigma$ is the largest value such that $|\mathcal{E}_\mathcal{D}(\theta + N(0, \sigma^2I)) - \mathcal{E}_\mathcal{D}(\theta)| \leq \tau$. Therefore, flatter local minima correspond to larger $\sigma$ values and smaller $\mu_{\text{PAC-Bayes}}$. The LPF-based flatness measure evaluates the flatness of a local minimum $\theta^*$ by computing the convolution of the loss function $\mathcal{L}$ with a Gaussian kernel $K=N(0, \sigma^2I)$. Formally, it is defined as $(\mathcal{L} * K)(\theta^*) = \int\mathcal{L}(\theta^* - \tau)K(\tau)d\tau$. This measure effectively averages the loss over a neighborhood around $\theta^*$, with the Gaussian kernel weighting nearby points more heavily, thereby quantifying how sharp the loss landscape is in that region. $\epsilon$-sharpness ($\epsilon_{\text{sharp}}$) measures the sensitivity of a loss function near a local minimum by quantifying the largest perturbation of model parameters that leads to an increase in loss of more than $\epsilon$.
We have adopted the default parameters for the $\mu_{\text{PAC-Bayes}}$, LPF measure, and $\epsilon_{\text{sharp}}$ in CIFAR experiments i.e. $\tau=0.05$, $\sigma=0.01$, and $\epsilon=0.1$ respectively. For the larger (tiny)ImageNet runs, we related parameters $\tau = 0.35$ and $\epsilon = 1.0$ to maintain numerical stability, making the constraints looser than those used for CIFAR.

\noindent \textbf{Training Details.} Our training procedure on CIFAR experiments closely follows~\citet{AugMix9:2021}, with the only difference being that we train the WideResNet-40-2 model for 200 epochs instead of 100. We employ an SGD optimizer with an initial learning rate of 0.1, a weight decay of 0.0005, and a momentum of 0.9. Additionally, we use a cosine learning rate decay scheduler that starts with the learning rate of 0.1 and a weight decay of 0.0005. Finally, we evaluate and report the flatness of the converged models. For \textit{tiny}ImageNet and ImageNet experiments, we have majorly adopted the configurations from~\cite{AugMix9:2021}, with changes in epochs and backbone networks for faster training. We have trained ResNet18 on \textit{tiny}ImageNet and ImageNet for 50 epochs, instead of training ResNet50 for 90 epochs. SGD optimizer with initial learning rate 0.01, weight decay 0.0005, and momentum 0.9 has been adopted. All the models have been trained until convergence, with cosine learning rate decaying scheduling. Regarding the augmentation techniques, default configuration parameters were used for all baseline augmentations except StyleAug, which had hyperparameter $\alpha$ set to $0.95$ for the CIFAR experiments, and 1.0 for the \textit{tiny}ImageNet experiments.

\subsection{C.3. Robustness Benchmark Details} \label{benchmark_explanation_apdx}

\noindent \textbf{Common Corruption Benchmarks.} We assessed the common corruption robustness of models trained with augmentations on the CIFAR-10/100, \textit{tiny}ImageNet, and ImageNet datasets in Appendix C.2. Specifically, we utilized their well-known corrupted counterparts---CIFAR-10/100-C/$\overline{\text{C}}$, \textit{tiny}ImageNet-C, and ImageNet-C--- which are standard benchmarks generated from the original datasets by applying 15 distinct corruption types at 5 different severity levels. These corruption types include brightness changes, contrast alterations, defocus blur, elastic transformations, fog addition, frost addition, Gaussian blur, glass distortion, impulse noise, JPEG compression, motion blur, pixelation, shot noise, snow addition, and zoom blur. Models were trained and validated on the respective clean datasets, with robustness evaluations performed at test time only to simulate real-world scenarios where models encounter unforeseen corruptions. To quantify model performance across all corruptions and severity levels, we calculated the mean Corruption Error (mCE), which averages the error rates over the different conditions.

\noindent \textbf{Adversarial Robustness Benchmarks.}
To benchmark model robustness under adversarial threat, we evaluated the models trained with augmentations on CIFAR-10/100, \textit{tiny}ImageNet, and ImageNet using widely recognized, untargeted Projected Gradient Descent (PGD) attacks crafted under both $L_2$ and $L_\infty$ norms. 
The training environment holds the same as in the common corruption benchmarks. We will use $\epsilon$ and $\alpha$ to denote the maximum allowed perturbation size. Regarding CIFAR-10/100 experiments, we utilized attack configurations from~\cite{boundary_thickness}. Specifically, we deployed PGD-20 $L_2$ attacks with $\epsilon=0.5, \alpha=0.0125$ and PGD-7 $L_\infty$ attacks with $\epsilon=8/255, \alpha=2/255$. For \textit{tiny}ImageNet experiments, we utilized PGD-10 $L_2$ attacks with $\epsilon=0.25, \alpha=0.025$ and PGD-5 $L_\infty$ attacks with $\epsilon=8/255, \alpha=2/255$. On ImageNet experiments, we deployed PGD-10 $L_2$ attacks with $\epsilon=0.25, \alpha=0.025$ and PGD-2 $L_\infty$ attacks with $\epsilon=2/255, \alpha=1/255$. In essence, milder attack configurations have been deployed for \textit{tiny}ImageNet and ImageNet.

\subsection*{C.4. Additional Experiments on Varying Backbone Networks}

To confirm our theorems across different backbone architectures, we provide supplementary experiments on CIFAR-100 robustness benchmarks, expanding upon the data shown in Table~\ref{tab:main_table_cifar}. On CIFAR-100, augmentations like AugMix, RandAug, and PixMix closely satisfy the PSA condition by producing significant augmented samples in the neighborhood of the original data, which results in flatter minima and improved robustness. In contrast, DeepAug and StyleAug lack augmented samples near the original inputs, failing to satisfy both the PSA condition and improved robustness. This pattern remains consistent across different backbone networks. As illustrated in Table~\ref{tab:varying_backbone}, our conclusions remain consistent on distinct backbone architectures~\cite{allconvnet, densenet, resnext}.

\begin{table*}[ht]
\centering
\begin{adjustbox}{width=1.0\textwidth}
\begin{tabular}{c|c|c|ccccc}
\midrule[1.5pt]
\textbf{Backbone} & \textbf{Benchmark} & ERM & $ \ \ $ AugMix $ \ \ $ & $ \ \ $ RandAug $ \ \ $ & $ \ \ $ PixMix $ \ \ $ & $ \ \ $ DeepAug $ \ \ $ & $ \ \ $ StyleAug $ \ \ $ \\
\midrule[1.5pt]

\multirow{4}{*}{AllConvNet} & \small CIFAR-100-C $\downarrow$
& 56.88 & 
\textbf{43.13} \small (-13.75) & 
\textbf{47.52} \small (-9.36) & 
\textbf{37.70} \small (-19.18) & 
\emph{62.00} \small (+5.12) & 
\emph{56.90} \small (+0.02) \\

 & \small CIFAR-100-$\overline{\text{C}}$ $\downarrow$
& 20.33 & 
\textbf{9.84} \small (-10.49) & 
\textbf{13.86} \small (-6.47) & 
\textbf{14.29} \small (-6.04) & 
\emph{64.24} \small (+43.91) & 
\emph{38.64} \small (+18.31) \\

& \small CIFAR-100, $L_{2}$ $\downarrow$
& 93.96 & 
\textbf{86.51} \small (-7.45) & 
\textbf{88.96} \small (-5.00) & 
\textbf{74.93} \small (-19.03) & 
\emph{95.29} \small (+1.33) & 
\emph{96.56} \small (+2.60) \\

 & \small CIFAR-100, $L_{\infty}$ $\downarrow$
& 99.95 & 
\textbf{99.90} \small (-0.05) & 
\textbf{99.94} \small (-0.01) & 
\textbf{99.64} \small (-0.31) & 
\emph{99.99} \small (+0.04) & 
\textbf{99.71} \small (-0.24) \\

\midrule

\multirow{4}{*}{DenseNet} & \small CIFAR-100-C $\downarrow$
& 61.21 & 
\textbf{43.32} \small (-17.89) & 
\textbf{48.54} \small (-12.67) & 
\textbf{37.58} \small (-23.63) & 
\emph{61.47} \small (+0.26) & 
\emph{68.97} \small (+7.76) \\

 & \small CIFAR-100-$\overline{\text{C}}$ $\downarrow$
& 23.10 & 
\textbf{8.87} \small (-14.23) & 
\textbf{13.68} \small (-9.42) & 
\textbf{14.27} \small (-8.83) & 
\emph{66.20} \small (+43.10) & 
\emph{43.90} \small (+20.80) \\

& \small CIFAR-100, $L_{2}$ $\downarrow$
& 99.91 & 
\textbf{97.88} \small (-2.03) & 
\textbf{99.44} \small (-0.47) & 
\textbf{93.19} \small (-6.72) & 
\textbf{98.70} \small (-1.21) & 
\textbf{99.76} \small (-0.15) \\

 & \small CIFAR-100, $L_{\infty}$ $\downarrow$
& 100.00 & 
\textbf{99.99} \small (-0.01) & 
\textbf{99.99} \small (-0.01) & 
\textbf{99.99} \small (-0.01) & 
\textbf{99.99} \small (-0.01) & 
\textbf{99.99} \small (-0.01) \\

\midrule

\multirow{4}{*}{ResNext29} & \small CIFAR-100-C $\downarrow$
& 58.05 & 
\textbf{39.60} \small (-18.45) & 
\textbf{43.49} \small (-14.56) & 
\textbf{31.35} \small (-26.70) & 
\emph{61.50} \small (+3.45) & 
\emph{70.11} \small (+12.06) \\

 & \small CIFAR-100-$\overline{\text{C}}$ $\downarrow$
& 13.09 & 
\textbf{3.85} \small (-9.24) & 
\textbf{6.99} \small (-6.10) & 
\textbf{2.64} \small (-10.45) & 
\emph{66.61} \small (+53.52) & 
\emph{22.78} \small (+9.69) \\

& \small CIFAR-100, $L_{2}$ $\downarrow$
& 98.72 & 
\textbf{91.68} \small (-7.04) & 
\textbf{95.95} \small (-2.77) & 
\textbf{91.64} \small (-7.08) & 
\textbf{98.36} \small (-0.36) & 
\emph{99.67} \small (+0.95) \\

 & \small CIFAR-100, $L_{\infty}$ $\downarrow$
& 100.00 & 
\textbf{99.94} \small (-0.06) & 
\textbf{99.97} \small (-0.03) & 
\textbf{99.90} \small (-0.10) & 
\textbf{99.96} \small (-0.04) & 
\textbf{99.98} \small (-0.02) \\

\midrule[1.5pt]
\end{tabular}
\end{adjustbox}

\caption{Comparison of various augmentation methods against ERM across different backbone networks on CIFAR-100 robustness benchmarks. \textit{($\downarrow$: The lower, the better.)}}
\label{tab:varying_backbone}
\end{table*}

\subsection{C.5. Results on Theorem 1's Bound}
Using the CIFAR-100 dataset, we evaluated the singular values for Theorem 1 and obtained
$\sigma_{\text{min}}^{x}=0.20,\sigma_{\text{max}}^{x}=13.48,\sigma_{\text{min}}^{\theta}=4.55,\sigma_{\text{max}}^{\theta}=52.56$.
The minimum singular values are not excessively small and maximum values not excessively large, thereby preventing loose bounds. 

\subsection*{C.6. Clean Accuracies of Models (Table~\ref{tab:main_table_cifar},~\ref{tab:flatness_metric_imagenet})}

\begin{table}[H]
\centering
\begin{adjustbox}{width=1.0\textwidth}
\begin{tabular}{c|c|ccccc}
\midrule[1.5pt]
$\quad$ \textbf{Dataset} $\quad$ & $ \ \ $ ERM $ \ \ $ & $ \ \ \quad $ AugMix $ \ \ \quad $ & $ \ \ \quad $ RandAug $ \ \ \quad $ & $ \ \ \quad $ PixMix $ \ \ \quad $ & $ \ \ \quad $ DeepAug $ \ \ \quad $ & $ \ \ \quad $ StyleAug $ \ \ \quad $ \\
\midrule[1.5pt]

ImageNet $\uparrow$ & 69.48 & \emph{69.33} \small (-0.15)
                                & \emph{68.85} \small (-0.63)
                                & \emph{66.86} \small (-2.62)
                                & \emph{65.05} \small (-4.43)
                                & \emph{54.35} \small (-15.13) \\
\textit{tiny}ImageNet $\uparrow$ & 64.74 & \textbf{65.08} \small (+0.34)
                                & \textbf{70.09} \small (+5.35)
                                & \textbf{66.97} \small (+2.23)
                                & \textbf{69.50} \small (+4.76)
                                & \emph{50.60} \small (-14.14) \\
CIFAR-10 $\uparrow$ & 90.25 & \textbf{95.60} \small (+5.35)
                                & \textbf{95.77} \small (+5.52)
                                & \textbf{95.75} \small (+5.50)
                                & \emph{80.92} \small (-9.33)
                                & \textbf{90.59} \small (+0.34) \\
CIFAR-100 $\uparrow$ & 64.82 & \textbf{70.40} \small (+5.58)
                                & \textbf{74.20} \small (+9.38)
                                & \textbf{74.68} \small (+9.86)
                                & \emph{49.25} \small (-15.57)
                                & \emph{38.76} \small (-26.06) \\

\midrule[1.5pt]
\end{tabular}
\end{adjustbox}

\end{table}

There is a weak correlation between \textit{robustness} and the \textit{clean accuracy}.
For instance, all methods have inferior clean accuracy compared to ERM in ImageNet, irrespective of their robustness. There are some cases where the clean accuracy increases but the robustness drops (e.g., StyleAug in CIFAR10), and some cases have dropped clean accuracy but improved robustness (e.g., PixMix in ImageNet). Also, some have severely dropped clean accuracy and robustness (e.g., StyleAug in CIFAR-100). We observe all four cases (better clean accuracy, better robustness / better clean accuracy, worse robustness / worse clean accuracy, better robustness / worse clean acc, worse robustness) exist.

\subsection*{C.7. Confidence Intervals for the Main Tables}

\begin{table*}[h]
\centering
\begin{adjustbox}{width=1.0\textwidth}
\begin{tabular}{cc|c|ccccc}
\midrule[1.5pt]
\textbf{\ \ Dataset \ \ } & \textbf{ \ \ Metrics \ \ } & $ \ \ \ $ ERM $ \ \ \ $ & $ \quad \ $ AugMix $ \quad \ $ & $ \quad \ $ RandAug $ \quad \ $ & $ \quad \ $ PixMix $ \quad \ $ & $ \quad \ $ DeepAug $ \quad \ $ & $ \quad \ $ StyleAug $ \quad \ $ \\
\midrule[1.5pt]

\multirow{3}{*}{CIFAR-10} 
& $\mu_{\text{PAC-Bayes}}\downarrow$ & 
168.83 $\pm$ 13.50 &
\textbf{117.16} $\pm$ 3.15 &
\textbf{110.80} $\pm$ 0.55 &
\textbf{102.87} $\pm$ 4.18 &
\textbf{67.74} $\pm$ 3.27 &
\textbf{122.80} $\pm$ 11.68 \\

& LPF $\downarrow$ &
1.43 $\pm$ 0.29 &
\textbf{0.54} $\pm$ 0.01 &
\textbf{0.45} $\pm$ 0.03 &
\textbf{0.37} $\pm$ 0.01 &
\textbf{0.72} $\pm$ 0.10 &
\textbf{1.13} $\pm$ 0.12 \\

&$\epsilon_{\text{sharp}}\downarrow$ & 40.90 $\pm$ 3.90 & 
\textbf{25.16} $\pm$ 4.32 &
\textbf{24.47} $\pm$ 2.81 &
\textbf{26.57} $\pm$ 8.34 &
\textit{55.63} $\pm$ 34.78 &
\textit{130.64} $\pm$ 1.69 \\

\midrule

\multirow{3}{*}{CIFAR-100} 

& $\mu_{\text{PAC-Bayes}}\downarrow$ & 
181.10 $\pm$ 9.53 &
\textbf{155.74} $\pm$ 0.01 &
\textbf{149.79} $\pm$ 8.41 &
\textbf{134.58} $\pm$ 1.64 &
\textbf{72.24} $\pm$ 2.47 &
\textbf{129.02} $\pm$ 13.25 \\

& LPF $\downarrow$ & 
2.37 $\pm$ 0.16 &
\textbf{1.84} $\pm$ 0.01 &
\textbf{1.69} $\pm$ 0.02 &
\textbf{1.42} $\pm$ 0.03 &
2.37 $\pm$ 0.23 &
\textit{3.95} $\pm$ 0.14 \\

& $\epsilon_{\text{sharp}}\downarrow$ & 
42.39 $\pm$ 4.68 &
\textbf{39.32} $\pm$ 2.93 &
\textbf{37.76} $\pm$ 3.32 &
\textbf{33.14} $\pm$ 4.76 &
\textit{84.48} $\pm$ 4.33 &
\textit{302.81} $\pm$ 5.14 \\

\midrule[1.5pt]
\end{tabular}
\end{adjustbox}

\begin{adjustbox}{width=0.98\textwidth}
\begin{tabular}{c|c|ccccc}
\midrule[1.5pt]
\textbf{$\quad \ $ Benchmarks $\quad \ $} & \ \ ERM \ \ & 
$\quad \ $ AugMix $\quad \ $ & $\quad \ $ RandAug $\quad \ $ & $\quad \ $ PixMix $\quad \ $ & $\quad \ $ DeepAug $\quad \ $ & $\quad \ $ StyleAug $\quad \ $ \\
\midrule[1.5pt]

\small CIFAR-10-C $\downarrow$
& 30.54 $\pm$ 0.45 &
\textbf{15.24} $\pm$ 0.27 &
\textbf{19.65} $\pm$ 0.32 &
\textbf{10.60} $\pm$ 0.18 &
\textit{31.92} $\pm$ 3.90 &
\textbf{30.50} $\pm$ 1.00 \\

\small CIFAR-10-$\overline{\text{C}}$ $\downarrow$
& 31.35 $\pm$ 0.59 &
\textbf{20.28} $\pm$ 0.32 &
\textbf{20.64} $\pm$ 0.14 &
\textbf{14.60} $\pm$ 0.16 &
\textit{36.75} $\pm$ 2.43 &
\textit{36.57} $\pm$ 0.02 \\

\small CIFAR-100-C $\downarrow$
& 59.04 $\pm$ 0.45 &
\textbf{42.64} $\pm$ 0.16 &
\textbf{46.59} $\pm$ 0.74 &
\textbf{35.20} $\pm$ 0.22 &
\textit{62.34} $\pm$ 3.33 &
\textit{70.91} $\pm$ 0.88 \\

\small CIFAR-100-$\overline{\text{C}}$ $\downarrow$
& 62.43 $\pm$ 1.16 &
\textbf{48.38} $\pm$ 0.27 &
\textbf{48.32} $\pm$ 0.41 &
\textbf{40.20} $\pm$ 0.18 &
\textit{67.91} $\pm$ 2.31 &
\textit{76.56} $\pm$ 0.57 \\

\small CIFAR-10, $L_2$ $\downarrow$
& 77.61 $\pm$ 3.86 &
\textbf{70.76} $\pm$ 1.05 &
\textbf{76.15} $\pm$ 1.65 &
\textbf{65.81} $\pm$ 2.07 &
\textit{91.18} $\pm$ 1.80 &
\textit{91.06} $\pm$ 0.39 \\

\small CIFAR-10, $L_\infty$ $\downarrow$
& 98.49 $\pm$ 1.64 &
\textit{99.10} $\pm$ 0.16 &
\textit{99.86} $\pm$ 0.02 &
\textit{99.46} $\pm$ 0.26 &
\textit{100.00} $\pm$ 0.00 &
\textit{99.98} $\pm$ 0.03 \\

\small CIFAR-100, $L_2$ $\downarrow$
& 98.73 $\pm$ 0.62 &
\textbf{92.76} $\pm$ 0.19 &
\textbf{96.06} $\pm$ 1.85 &
\textbf{90.69} $\pm$ 2.55 &
\textbf{98.44} $\pm$ 0.50 &
\textit{99.69} $\pm$ 0.02 \\

\small CIFAR-100, $L_\infty$ $\downarrow$
& 99.94 $\pm$ 0.09 &
\textbf{99.67} $\pm$ 0.09 &
99.94 $\pm$ 0.11 &
\textit{99.69} $\pm$ 0.44 &
\textit{99.99} $\pm$ 0.02 &
\textit{99.98} $\pm$ 0.02 \\

\midrule[1.5pt]
\end{tabular}
\end{adjustbox}

\begin{adjustbox}{width=0.98\textwidth}
\begin{tabular}{c|c|ccccc}
\midrule[1.5pt]
\textbf{$\quad \ $ Benchmarks $\quad \ $} & \ \ ERM \ \ & 
$\quad \ $ AugMix $\quad \ $ & $\quad \ $ DeepAug $\quad \ $ & $\quad \ $ PixMix $\quad \ $ & $\quad \ $ RandAug $\quad \ $ & $\quad \ $ StyleAug $\quad \ $ \\
\midrule[1.5pt]

\small \textit{tiny}ImageNet-C $\downarrow$
& 74.76 $\pm$ 0.47 &
\textbf{63.48} $\pm$ 0.28 &
\textbf{56.14} $\pm$ 0.12 &
\textbf{61.51} $\pm$ 0.28 &
\textbf{66.92} $\pm$ 0.21 &
\textit{79.42} $\pm$ 1.19 \\

\small \textit{tiny}ImageNet, $L_2$ $\downarrow$
& 57.77 $\pm$ 0.57 &
\textit{61.84} $\pm$ 1.13 &
\textbf{52.86} $\pm$ 0.34 &
\textit{58.38} $\pm$ 1.41 &
\textit{63.19} $\pm$ 0.74 &
\textit{83.13} $\pm$ 0.63 \\

\small \textit{tiny}ImageNet, $L_\infty$ $\downarrow$
& 99.94 $\pm$ 0.02&
\textbf{99.93} $\pm$ 0.04 &
\textit{99.95} $\pm$ 0.03 &
\textit{99.98} $\pm$ 0.02 &
\textit{99.98} $\pm$ 0.01 &
\textit{100.0} $\pm$ 0.00 \\

\midrule[1.5pt]
\end{tabular}
\end{adjustbox}

\begin{adjustbox}{width=1.0\textwidth}
\begin{tabular}{cc|c|ccccc}
\midrule[1.5pt]
\textbf{\ \ Dataset \ \ } & \textbf{ \ \ Metrics \ \ } & $ \ \ \ $ ERM $ \ \ \ $ & $ \quad \ $ AugMix $ \quad \ $ & $ \quad \ $ DeepAug $ \quad \ $ & $ \quad \ $ PixMix $ \quad \ $ & $ \quad \ $ RandAug $ \quad \ $ & $ \quad \ $ StyleAug $ \quad \ $ \\
\midrule[1.5pt]

\multirow{3}{*}{\textit{tiny}ImageNet} 
& $\mu_{\text{PAC-Bayes}}\downarrow$ & 
136.83 $\pm$ 2.05 &
\textbf{109.18} $\pm$ 0.97 &
\textbf{128.41} $\pm$ 2.36 &
\textbf{106.26} $\pm$ 1.97 &
\textbf{109.36} $\pm$ 0.97 &
\textbf{102.72} $\pm$ 1.45 \\

& LPF $\downarrow$ &
5.95 $\pm$ 0.27 &
\textbf{3.81} $\pm$ 0.17 &
\textbf{4.96} $\pm$ 0.12 &
\textbf{3.40} $\pm$ 0.09 &
\textbf{4.04} $\pm$ 0.11 &
\textbf{4.33} $\pm$ 0.20 \\

&$\epsilon_{\text{sharp}}\downarrow$ & 
15.35 $\pm$ 0.51 &
\textbf{13.09} $\pm$ 0.32 &
\textit{25.02} $\pm$ 1.11 &
\textbf{10.72} $\pm$ 0.66 &
\textit{21.30} $\pm$ 1.92 &
\textit{35.40} $\pm$ 3.75 \\

\midrule[1.5pt]
\end{tabular}
\end{adjustbox}

\caption{95\% confidence intervals for CIFAR and \textit{tiny}ImageNet experiments. (Table~\ref{tab:flatness_metric_cifar},~\ref{tab:main_table_cifar},~\ref{tab:flatness_metric_imagenet},~\ref{tab:main_table_imagenet}.)}
\end{table*}

\subsection{C.8. Licensing and Computational Resources}
\subsubsection{Licenses}\label{sec:license}
AugMix and RandAugment are under Apache 2.0 License. PixMix, DeepAugment, and StyleAugment are under MIT License.

\subsubsection{Computational Resources for Experiments}
\label{sec:resource}
We have utilized NVIDIA RTX A5000 24G, RTX A6000 48G, and A100 SXM 40G in our experiments. Specifically, we have mainly used A5000 for CIFAR experiments (Table~\ref{tab:flatness_metric_cifar},~\ref{tab:main_table_cifar}), A6000 for \textit{tiny}ImageNet experiments, and A100 SXM 40G for ImageNet experiments. Regarding Table~\ref{tab:main_table_imagenet}, while the training time heavily differed by each augmentation method, the training time for CIFAR experiments took less than three days to compute the average values of each method. The evaluation time of each model on CIFAR took less than half an hour on both common corruption and adversarial robustness benchmarks.
For the \textit{tiny}ImageNet experiments, the training time took roughly 3 days to 1 week per each method running thrice. The evaluation time took roughly one hour per each method. For the ImageNet study, we have deployed external A100 SXM 40G GPUs for evaluation (\texttt{lambdalab}). The training time took roughly 3 days to 1 week per each method running once. The evaluation took roughly 3$\sim$5 hours for the common corruptions, and an hour for the adversarial attacks. 
Regarding Table~\ref{tab:flatness_metric_cifar} and Table~\ref{tab:flatness_metric_imagenet}, the flatness evaluation time took roughly less than two hours for models trained with each method.

\section*{D. Further Discussions}\label{sec:discussion_apdx}
\subsection{Practical Applications of Our Theorems}
Although the theoretical insights we provide into how data augmentation influences robustness are valuable in their own right, we herein present some practical consequences of our findings.

First, when proposing a novel, complex augmentation, prioritize \textit{densely populating} the proximal region around the original image. In practice, researchers can perform a quick, low-cost check by inspecting how the augmented samples cluster near the source image. If that neighborhood is well covered, the resulting model is highly likely to remain robust under unforeseen data shifts.

Second, we can roughly forecast the benefit of combining two augmentations by looking at each augmentation’s sample density close to the original data. For example, DeepAug and StyleAug both leave the region empty, whereas AugMix fills it densely. Hence, pairing AugMix with either DeepAug or StyleAug should yield a substantial improvement (by filling the gap), while combining DeepAug with StyleAug alone should offer little extra robustness.

\begin{figure}[h]
	\begin{center}
		\includegraphics[width=1.0\linewidth]{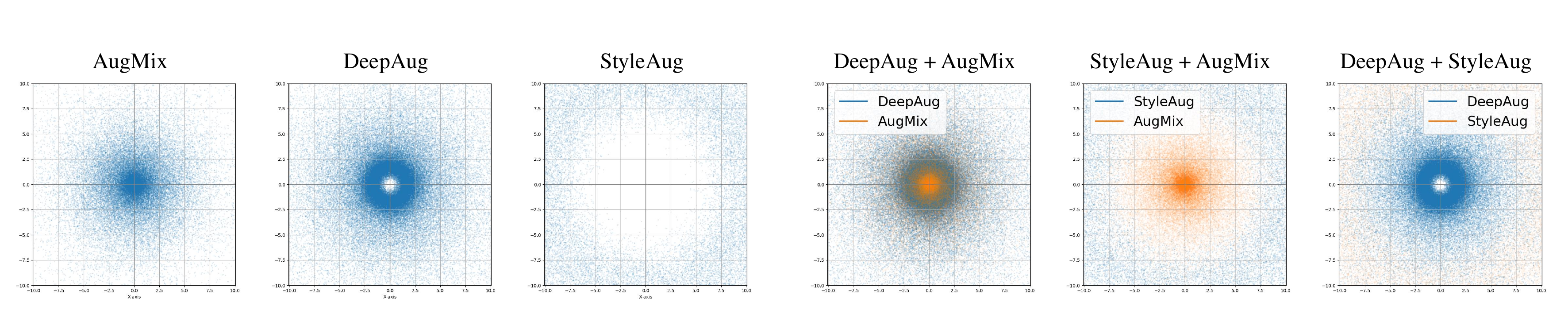}
	\end{center}
	\caption{A 2D sample densities generated by each augmentation and their pairwise combinations. AugMix densely covers the proximal neighborhood, DeepAug and StyleAug leave it sparse, and mixing AugMix with either closes the gap. However, DeepAug + StyleAug still leaves a central hole.}
\label{fig:2d}
\end{figure}

\begin{table*}[ht]
\centering
\begin{adjustbox}{width=1.0\textwidth}
\begin{tabular}{c|cccccc}
\midrule[1.5pt]
\textbf{Benchmark} & AugMix  &  DeepAug  &  StyleAug  &  DeepAug + AugMix  &  StyleAug + AugMix  &  DeepAug + StyleAug  \\
\midrule[1.5pt]

\small CIFAR-10-C $\downarrow$ &
15.24 & 
31.92 & 
30.50 &
\textbf{15.04} \small (-16.88) &
\textbf{19.02} \small (-11.48) &
\textbf{24.84} \small (-5.66) \\

\small CIFAR-10-$\overline{\text{C}}$ $\downarrow$ &
1.73 & 
17.99 & 
9.34 &
\textbf{1.32} \small (-16.67) &
\textbf{1.55} \small (-7.79) &
\textbf{4.12} \small (-5.22)\\

\small CIFAR-100-C $\downarrow$ &
42.64 & 
62.34 & 
70.92 &
\textbf{43.53} \small (-18.81) &
\textbf{49.13} \small (-21.79) &
\textbf{53.50} \small (-17.42)\\

\small CIFAR-100-$\overline{\text{C}}$ $\downarrow$ &
5.01 & 
39.54 & 
30.77 &
\textbf{3.67} \small (-35.87) &
\textbf{3.70} \small (-27.07) &
\textbf{7.60} \small (-23.17) \\

\small CIFAR-10, $L_2$ $\downarrow$ &
70.68 & 
91.07 & 
91.13 &
\textbf{69.30} \small (-21.77) &
\textbf{81.07} \small (-10.06) &
\textbf{87.30} \small (-3.83)\\

\small CIFAR-10, $L_\infty$ $\downarrow$ &
99.57 & 
100.00 & 
99.97 &
\textbf{94.15} \small (-4.22) &
\textbf{96.34} \small (-3.33) &
\textbf{99.80} \small (-0.17) \\

\small CIFAR-100, $L_2$ $\downarrow$ &
92.78 & 
98.37 & 
99.67 &
\textbf{94.15} \small (-4.22) &
\textbf{96.34} \small (-3.33) &
\textbf{97.53} \small (-2.14) \\

\small CIFAR-100, $L_\infty$ $\downarrow$ &
99.67 & 
99.99 & 
99.98 &
\textbf{99.89} \small (-0.10) &
\textbf{99.95} \small (-0.03) &
\textbf{99.96} \small (-0.02) \\

\midrule[1.5pt]
\end{tabular}
\end{adjustbox}
\caption{The effect of combining augmentations that have significant differences in proximal density. In all the cases, combining DeepAug or StyleAug with AugMix yielded notable robustness gain. Nevertheless, combining DeepAug with StyleAug yielded marginal improvement.}
\end{table*}

Our study also provides explanations on the behavior of adversarial training (AT), and rooms for the improvement. While AT is known to strengthen robustness to adversarial attacks, models trained with AT still break down when the test distribution deviates largely from the training data, as in common corruption case~\cite{IN-C7:2019, IN-C-bar:2021}. We conjecture that this fragility arises because AT typically discovers \textit{small} $b$-flat minima: these minima are flat enough to absorb minor, norm-bounded perturbations, yet they fail to cope with large distribution shifts. Hence, pairing AT with data augmentation schemes that generate samples both close to and far from the original data manifold (such as AugMix) can offset AT’s weakness under severe shifts (Table~\ref{tab:fgsm_flatness},~\ref{tab:fgsm_robustness}.)

\subsection{Beyond the Visual Domain}\label{discussion_beyond_visual_data}
Our experiments focus on image classification, yet the PSA condition and accompanying theorems do not rely on properties specific to images. We therefore expect the framework to extend to other modalities such as language, audio, or sensor data, where augmentation is commonplace. Applying the analysis to sequential tasks such as NLP or reinforcement learning is a natural next step and may reveal modality-specific nuances; we leave these directions for future work.

\end{document}